\newcites{AP}{References for the Supplementary Material}
\newcommand{\bmx}{x}
\newcommand{\bmy}{y}
\newcommand{\bmf}{f}
\newcommand{\bmz}{z}
\newcommand{\bmW}{W}
\newcommand{\calI}{\mathcal{I}}
\newcommand{\calJ}{\mathcal{J}}
\DeclareMathOperator*{\argmax}{arg\,max}
\theoremstyle{plain}
\newtheorem{lemma}{Lemma}
\theoremstyle{definition}
\theoremstyle{remark}
\icmltitlerunning{Bayesian Optimization of Function Networks with Partial Evaluations}
\begin{document}

\twocolumn[
\icmltitle{Bayesian Optimization of Function Networks with Partial Evaluations}



\icmlsetsymbol{equal}{*}

\begin{icmlauthorlist}
\icmlauthor{Poompol Buathong}{equal,cam}
\icmlauthor{Jiayue Wan}{equal,orie}
\icmlauthor{Raul Astudillo}{caltech}\\
\icmlauthor{Samuel Daulton}{meta}
\icmlauthor{Maximilian Balandat}{meta}
\icmlauthor{Peter I. Frazier}{orie}
\end{icmlauthorlist}

\icmlaffiliation{cam}{Center for Applied Mathematics, Cornell University}
\icmlaffiliation{orie}{School of Operations Research and Information Engineering, Cornell University}
\icmlaffiliation{meta}{Meta}
\icmlaffiliation{caltech}{Department of Computing and Mathematical Sciences, Caltech}

\icmlcorrespondingauthor{Poompol Buathong}{pb482@cornell.edu}

\icmlkeywords{Machine Learning, ICML}

\vskip 0.3in
]



\printAffiliationsAndNotice{\icmlEqualContribution} 
\setcounter{footnote}{0} 

\begin{abstract}
Bayesian optimization is a powerful framework for optimizing functions that are expensive or time-consuming to evaluate. Recent work has considered Bayesian optimization of function networks (BOFN), where the objective function is given by a network of functions, each taking as input the output of previous nodes in the network as well as additional parameters. Leveraging this network structure has been shown to yield significant performance improvements. Existing BOFN algorithms for general-purpose networks evaluate the full network at each iteration. However, many real-world applications allow for evaluating nodes individually. 
To exploit this, we propose a novel knowledge gradient acquisition function that chooses which node and corresponding inputs to evaluate in a cost-aware manner, thereby reducing query costs by evaluating only on a part of the network at each step. 
We provide an efficient approach to optimizing our acquisition function and show that it outperforms existing BOFN methods and other benchmarks across several synthetic and real-world problems. Our acquisition function is the first to enable cost-aware optimization of a broad class of function networks.
\end{abstract}


\section{Introduction}
\label{sec:intro}
Bayesian optimization (BO) \citep{movckus1975bayesian,frazier2018tutorial} has emerged as a powerful framework for optimizing functions with expensive or time-consuming evaluations. BO has proved its efficacy in a variety of applications, including hyperparameter tuning of machine learning models \citep{snoek2012practical}, materials design \citep{frazier2008knowledge,zhang2020bayesian}, vaccine manufacturing \citep{rosa2022maximizing}, and pharmaceutical product development \citep{sano2020application}. 

In many applications, such as manufacturing \citep{ghasemi2018review}, epidemic model calibration \citep{garnett2002introduction}, machine learning pipeline optimization \citep{xin2021production}, and robotic control \citep{plappert2018multi}, objective functions are computed by evaluating a network of functions where each function
takes as input the outputs of its parent nodes. 
Consider the function network in Figure~\ref{fig:manuexam}, which illustrates the stages of a manufacturing process. The process begins with a raw material described by~$\bmx_1$. This raw material is used to produce an intermediate part described by~$\bmy_1$ through a process~$\bmf_1$.
Similarly, a second raw material described by~$\bmx_2$ is used to produce another intermediate part described by~$\bmy_2$ through a process~$\bmf_2$. 
These parts (with properties~$\bmy_1$ and~$\bmy_2$) are combined with another raw material described by~$\bmx_3$ in a process $\bmf_3$ to make the final product, the quality of which is denoted by $\bmy_3$.
Our goal is to choose $\bmx_1,\bmx_2,\bmx_3$ to maximize $\bmy_3$.
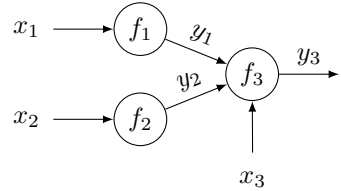
\begin{figure}[]
\centering
\begin{tikzpicture}[
init/.style={
  draw,
  circle,
  inner sep=0.7pt,
  minimum size=0.7cm
},
init2/.style={
  circle,
  inner sep=0.7pt,
  minimum size=0.7cm
},
]
\begin{scope}[start chain=1,node distance=8mm]
\node[on chain=1, init2]
(x1) {$\bmx_1$};
\node[on chain=1, init] 
  (f1) {$\bmf_1$};
\end{scope}
\begin{scope}[start chain=2,node distance=8mm]
\node[on chain=2,init] (f3) at (3,-6mm) {$\bmf_3$};
\node[on chain=2,init2] (f4) 
  {};
\end{scope}
\begin{scope}[start chain=3,node distance=8mm]
\node[on chain=3, init2] at (0,-12mm)
(x2) {$\bmx_2$};
\node[on chain=3,init]
 (f2) {$\bmf_2$};
 \node[on chain=3,init2] at (1.85,-20mm)
 (x3) {$\bmx_3$};
\end{scope}
\draw[-latex] (f1) -- (f3)node[pos=0.5,sloped,above] {$\bmy_1$};
\draw[-latex] (f2) -- (f3)node[pos=0.5,sloped,above] {$\bmy_2$};
\draw[-latex] (x1) -- (f1);
\draw[-latex] (x2) -- (f2);
\draw[-latex] (x3) -- (f3);
\draw[-latex] (f3) -- (f4)node[pos=0.5,sloped,above] {$\bmy_3$};
\end{tikzpicture}
\caption{An example function network in the manufacturing problem.}
\label{fig:manuexam}
\end{figure}

\citet{astudillo2021bayesian} showed that utilizing intermediate outputs in the network, i.e., $\bmy_1$ and $\bmy_2$, to decide which design parameters $\bmx=(\bmx_1,\bmx_2,\bmx_3)$ to evaluate significantly improves the performance of BO. However, this and other prior work have not exploited the ability to perform \emph{partial evaluations} of the function network, i.e., the ability to evaluate only a subset of nodes in the network at each iteration and use the so-obtained information to decide on the inputs to subsequent nodes, and potentially even to pause the evaluation process. As we demonstrate later, doing so can significantly improve performance, especially when evaluation costs vary significantly across nodes. For example, if evaluating~$\bmf_1$ is much cheaper than evaluating~$\bmf_2$, it may be advantageous to initially focus resources on understanding the range of values taken by $\bmy_1$ before performing too many costly evaluations of $\bmf_2$.

In this work, we introduce a BO algorithm that significantly improves performance over existing methods by taking advantage of the ability to perform partial evaluations.
This algorithm iteratively selects a node in the function network and a corresponding input to evaluate it, with the goal of identifying the global optimum within a limited budget.

Our contributions are summarized as follows:
\begin{enumerate}
    \item We introduce a framework for Bayesian optimization of function networks that allows partial evaluations.   
    \item We propose a knowledge-gradient-based acquisition function (p-KGFN) that, to our knowledge, is the first to actively leverage partial evaluations in general function networks in a cost-aware fashion.
    \item We propose an approximation of p-KGFN that can be optimized more efficiently.
    \item We demonstrate the benefits of exploiting partial evaluations through several numerical experiments, including both synthetic and real-world applications with a variety of network structures.
\end{enumerate}

\section{Related Work}
\paragraph{Grey-box BO} Our work falls within grey-box BO \citep{astudillo2021thinking}, which focuses on exploiting the known structure of the objective function (e.g., the function network structure considered in our work) to improve sampling decisions. BO of functions with a composite or network structure has been previously studied in the literature. For instance, \citet{uhrenholt2019efficient} considered objective functions that are sums of squared errors, while \citet{astudillo2019bayesian} and \citet{jain2023bayesian} considered a more general setting where the objective function is the composition of an expensive vector-valued inner function and a cheap outer function. BO of function networks was pioneered by \citet{astudillo2021bayesian}, introducing a probabilistic model that exploits function network structure and pairing this model with the expected improvement (EI) acquisition function \citep{jones1998efficient}. 

\paragraph{BO with Partial Evaluations} The ability to perform partial evaluations in the context of BO of function networks has been studied for specific network structures. \citet{kusakawa2022bayesian} considered function networks constituted by a chain of nodes and developed an algorithm that can pause an evaluation at an intermediate node. However, their approach, which uses an EI-based acquisition function, cannot be easily extended to quantify the value of evaluating a single node in more general function networks. \citet{lin2021bayesian} explored a setting where changing values of a subset of variables corresponding to different stages in a pipeline incurs a ``switching cost''. Their approach assumes fully sequential dependence between stages and cannot reuse previous evaluations. 
Additionally, \citet{lin2021bayesian} adopted a ``slow-moving bandit'' formulation that aims to minimize cumulative regret, whereas we seek to minimize simple regret.
Outside the function networks setting, \citet{hernandez2016} and \citet{daulton2023hvkg}  considered BO with partial evaluations for constrained and multi-objective optimization, respectively.

\paragraph{Cost-aware BO} Our work is related to research considering heterogeneous evaluation costs across the search space. Our approach is similar in nature to those proposed by \citet{snoek2012practical}, \citet{ wu2020practical}, and \citet{daulton2023hvkg}, whose acquisition functions value points based on the value of information per unit cost, thus favoring lower-cost evaluations. \citet{lee2020cost} adopted a cost-cooling schedule that discourages high-cost points early in the BO loop, \citet{abdolshah2019cost} incorporated cost-aware constraints while solving multi-objective BO problems, and \citet{astudillo2021multi} and \citet{lee2021nonmyopic} proposed non-myopic acquisition functions formulated using Markov decision processes for solving budgeted BO problems. 

\section{Problem Statement and Statistical Model}
\label{sec:BOFNwithPE}

\subsection{Problem Statement}
\label{subsec:BOFN_PE:problem}

Following the setup of \citet{astudillo2021bayesian}, we consider a sequence of functions $\bmf_1,\bmf_2,\hdots,\bmf_K$, arranged as nodes in a network representing the evaluation process. Specifically, the network structure is encoded as a directed acyclic graph  $\mathcal{G}=(\mathcal{V},\mathcal{E}),$ where $\mathcal{V}=\{1,2,\hdots,K\}$ and $\mathcal{E}=\{(i,j): \bmf_j \text{ takes the output of  } \bmf_i \text{ as input}\}$ denote the sets of nodes and edges, respectively. We assume that the final node function, $f_K$, is scalar-valued. However, the other node functions may be vector-valued. 

Let $\calJ(k)$ denote the parent nodes of node $k$.  Without loss of generality, we assume that nodes are ordered such that $j<k$ for all $j\in \calJ(k)$. Let $\calI(k)\subseteq\{1,2,\hdots,d\}$ denote the set of components of the input vector $\bmx\in\mathbb{X}\subset \mathbb{R}^d$ taken as an input by each function $\bmf_k$.\footnote{This set may be empty for some nodes if they take as input only the outputs from their parent nodes.}  The output of node $k$ when the function network is evaluated at $x$ is denoted by $y_k(x)$. The outputs $y_1(x), y_2(x),\hdots, y_K(x)$ can be computed recursively as
\begin{equation}
y_k(x)=f_k(y_{\calJ(k)}(x),x_{\calI(k)}), \  k=1,2,\hdots,K.  
\end{equation}

For each node $k$, we assume that there is an associated known positive evaluation cost function $c_k(\cdot)$.\footnote{When $c_k(\cdot)$ is unknown, we may learn it using a surrogate model and compute quantities involving costs by either taking the expectation over the distribution of $c_k(\cdot)$ or by replacing the cost function by the cost model's posterior mean.}  Our goal is to maximize the final node's function value~$\bmy_K(\bmx)$ while minimizing the cumulative evaluation cost. To support this goal, our algorithm will select at each iteration a node $k$ and corresponding input $z_k$ at which $f_k$ will be evaluated.  

We distinguish two settings associated with the feasible values of $z_k$:
\begin{enumerate}
    \item Evaluating a node $k$ requires to previously obtain the outputs from its parent nodes, in which case $z_k$ is comprised of the concatenation of these values and the additional parameters corresponding to node $k$.
    \item The possible outputs of each node are known, and each node $k$ can be evaluated at any feasible input (any admissible controllable input as well as any possible output of its parent nodes).
\end{enumerate}
We focus on the first setting, which aligns with many practical situations. For example, in our manufacturing problem, executing a step requires the outputs of the preceding steps. Additionally, we restrict our attention to function networks where pairs of nodes do not share common inputs. This ensures there are valid combinations for evaluation at downstream nodes. However, this assumption can be relaxed by grouping nodes with shared inputs as a preprocessing step.


Finally, we consider the scenario in which each intermediate output is reusable. In other words, once a node's output is obtained, it can be repeatedly used in downstream evaluations. This scenario is common in settings such as machine learning (ML) pipeline optimization, where trained ML models can be saved and reused, or in large-batch manufacturing, where the manufactured batch volume is effectively infinite relative to the amounts required downstream.

\subsection{Statistical Model}
\label{sec:statmodel}
Following \citet{astudillo2021bayesian}, we model the functions $\bmf_1,\bmf_2,\hdots,\bmf_K$ as samples from independent Gaussian process (GP) prior distributions \citep{williams2006gaussian}. For each $k=1,2,\hdots,K$, let $\mu_{0,k}$ and $\Sigma_{0,k}$ denote the prior mean and covariance functions of $\bmf_k$, respectively. Let $\mathcal{D}_{n,k} = \{(\bmz_{j,k},\bmy_{j,k})\}_{j=1}^{n_k}$ denote the observations at node $k$ after $n$ iterations, where $n_k$ is the number of observations at node $k$. The posterior distribution over $\bmf_k$ given $\mathcal{D}_{n,k}$ is a Gaussian process whose mean and covariance functions, denoted by $\mu_{n,k}$ and $\Sigma_{n,k}$, can be computed in closed form using the standard GP regression formulas \citep{williams2006gaussian}.

Let $\mathcal{D}_n = \{\mathcal{D}_{n,k}\}_{k=1}^K$ denote the observations at all nodes after $n$ iterations. The posterior distributions over $\bmf_1, \bmf_2, \hdots, \bmf_K$ given $\mathcal{D}_n$ induce a posterior distribution on the final node value $\bmy_K$. Although this distribution is generally non-Gaussian, we can obtain samples from it efficiently, as discussed in Section~\ref{sec:postest}.

Our acquisition function, formally defined in Section~\ref{sec:BOFN_PE:kgacqf}, is constructed based on these posterior distributions and evaluation costs $c_k(\cdot)$. It quantifies the cost-normalized benefit of performing one additional partial evaluation at a specific node. Our BO algorithm then decides to evaluate at a node $k^*$ with input $\bmz^*_{k^*}$ yielding the maximum value of this acquisition function.

\section{The p-KGFN Acquisition Function}
\label{sec:BOFN_PE:kgacqf}

Throughout this section, we assume that $n$ samples have already been observed and are determining how to allocate sample $n+1$.

Let $\nu_{n}(\bmx)$ denote the posterior mean of  $\bmy_K(\bmx)$ given $\mathcal{D}_n$.
Assuming risk-neutrality, the solution we would select if we were to stop at time $n$ would be an $\bmx$ that maximizes the posterior mean of the final node's value,\footnote{Note that we are concerned about the cost of evaluating a configuration \emph{during but not after} the optimization.} i.e., a solution of
\begin{equation}
    \nu^*_{n} = \max_{\bmx\in\mathbb{X}} \nu_{n}(\bmx).\label{eqn:mustar}
\end{equation}
Now, suppose one additional evaluation at a single node is allowed. For a node $k$ with a given input $\bmz_k$, observing $\bmf_{k}(\bmz_k)$ would result in an updated posterior over~$\bmf_k$, which in turn yields an updated posterior mean function of the final node value $\nu_{n+1}(\cdot)$ and also an updated maximum of the final node's posterior mean $\nu_{n+1}^{*}$. The difference between the two quantities,  i.e., $\nu^*_{n+1}-\nu^*_{n}$, quantifies the increment in the expected solution quality. 

We note that $\nu^*_{n+1}-\nu^*_{n}$ is random at time $n$ due to its dependence on the yet unobserved value of $\bmf_k(\bmz_k)$. Our acquisition function is obtained by taking the expectation of this increment with respect to the posterior on $\bmf_k(\bmz_k)$ and dividing it by the evaluation cost $c_k(\bmz_k)$. Specifically, we define the knowledge gradient for function networks with partial evaluations (p-KGFN) by
\begin{equation}
    \alpha_{n,k}(\bmz_k) = \frac{\mathbb{E}_{\bmy_k}[\nu_{n+1}^*]- \nu_{n}^*}{c_k(\bmz_k)}.
\label{eqn:kgfn}
\end{equation}

The feasible set for $z_k$ is given by $\mathbb{Z}_{n,k}:= \mathbb{Y}_{n,\calJ(k)}\times\mathbb{X}_{\calI(k)}$, where $\mathbb{Y}_{n,\calJ(k)}$ is the discrete set constituted by the outputs from the parent nodes of node $k$ that have been previously generated after $n$ iterations and $\mathbb{X}_{\calI(k)}$ is the set of possible additional parameters at node $k$. Thus, at each iteration, the next node and corresponding inputs to evaluate are given by  
\begin{equation}
    (k^*,\bmz^*_{k^*}) \in \argmax_{k\in\{1,\hdots,K\}, \  \bmz_k\in\mathbb{Z}_{n,k}}\  \alpha_{n,k}(\bmz_k). \label{eqn:kgdecision}
\end{equation}

Our acquisition function generalizes the classical knowledge gradient for regular BO \citep{frazier2008knowledge,wu2016parallel}.
Moreover, it is cost-ware (in that it favors lower-cost evaluations at the same expected quality) and thus is similar in nature to the acquisition functions proposed by \citet{snoek2012practical}, \citet{wu2020practical}, and \citet{daulton2023hvkg}.

\begin{figure*}[h!]
    \centering
    \includegraphics[height=14cm]{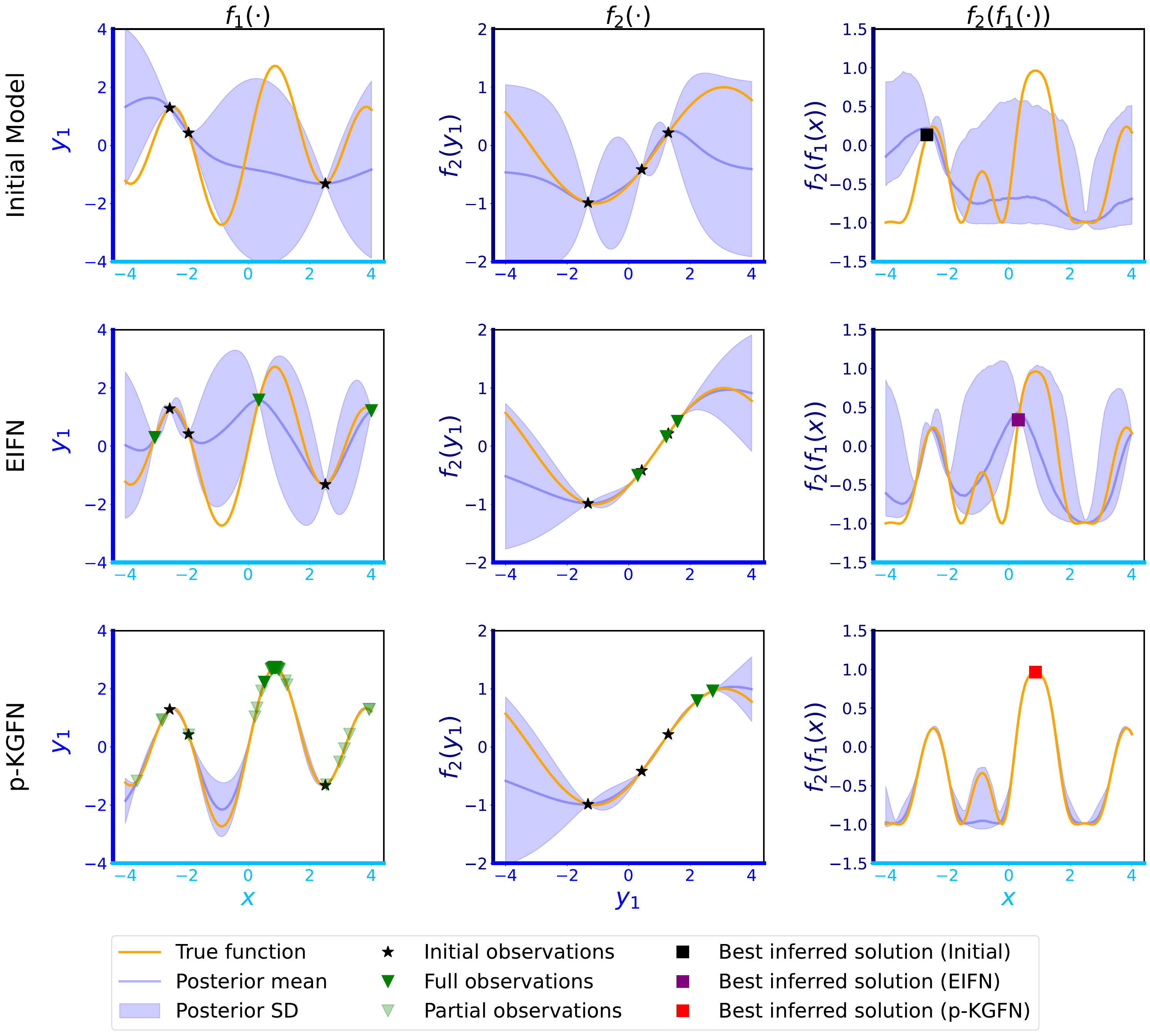}
    \caption{Comparison of EIFN and p-KGFN on a 1-D synthetic two-stage function network $f_2(f_1(\cdot))$. Top row (left to right): Initial models for $f_1(\cdot)$, $f_2(\cdot)$ and $f_2(f_1(\cdot))$. Second and third rows: Resulting surrogate models upon budget depletion by EIFN and p-KGFN, respectively. Each ground truth function is represented by an orange curve, while blue curves and shaded blue areas denote posterior mean and standard deviation, respectively. Black stars indicate the initial three points fully evaluated across the network for both algorithms. Dark green triangles represent the locations of full network evaluations. Light green triangles represent partial observations where only the first node was evaluated by p-KGFN. Black, purple, and red squares correspond to the initial and final inferred best solutions identified by EIFN and p-KGFN, respectively. We use the different colors for each axis to represent different types of inputs and outputs of the network as follows: light blue denotes the original input $x$ to the network, dark blue denotes the output of the first node $y_1$, and dark navy denotes the output of the second node $y_2$.}
    \label{fig:toyproblem}
\end{figure*}

\subsection{Advantages of Partial Evaluations}\label{subsec:advantage_partial}
In this section, we illustrate the benefits of performing partial evaluations, as enabled by p-KGFN, through a simple two-stage function network example. 
Consider $f_1(x) = \sin(x) + 2\sin(2x)$ with domain $x\in[-4,4]$, and $f_2(y) = \sin(3(y-1) / 4)$, which takes as input the output of $f_1$. Additionally, assume that evaluation costs are constant given by $c_1=1$ for the first stage and $c_2=49$ for the second stage. We analyze the behavior of our proposed acquisition function, p-KGFN, and the acquisition function proposed by \citet{astudillo2021bayesian},  EIFN, which also leverages the function network structure of the objective but requires full network evaluations at each iteration.

As shown in Figure~\ref{fig:toyproblem}, both EIFN and p-KGFN begin with three initial observations (black stars), evaluated across the full network. 
The initial models for $f_1(\cdot)$, $f_2(\cdot)$ and $f_2(f_1(\cdot))$ are presented in the first row. 
Both algorithms are allocated an evaluation budget of 150, which is equivalent to performing three evaluations of the full network.  
Rows two and three  
show the evaluations and resulting models upon budget depletion using EIFN and p-KGFN, respectively.
We observe that EIFN makes decisions aimed at identifying the global maximum using the composite network model (third column) without realizing that the first function node is more complicated and that its evaluation is more cost-effective. 
Therefore, EIFN first chooses to evaluate in a region close to the initial inferred best solution (black square) and then performs two full evaluations, exploring areas with high uncertainty, such as the boundary at $x=4$, and its inferred best solution upon budget depletion (purple square).

In contrast, p-KGFN takes evaluation costs into account and allocates the budget more efficiently. It first gathers information about the first function node through multiple evaluations (light green triangles) and then evaluates the second node only at the points that it considers most likely to improve the expected solution quality. This behavior yields a more efficient sampling policy which, in turn, results in a more accurate composite function model and inferred best solution (red square).

Similar behaviors emerge when comparing p-KGFN against KGFN with full evaluations, a knowledge-gradient-based acquisition function that also leverages function network but requires full evaluations (see Appendix~\ref{app:pKGFNVsKGFN}).

\section{Maximization of p-KGFN}
\label{sec:optKG}
For simplicity, here we assume that $f_1, f_2, \hdots, f_K$ are scalar-valued.\footnote{Our framework can directly handle multi-output function nodes by employing a multi-output GP model \citep{alvarez2012kernels} for each~$f_k$.} To solve \eqref{eqn:kgdecision}, it suffices to solve 
\begin{equation}
\bmz_k^*\in\argmax_{\bmz_k\in\mathbb{Z}_{n,k}} \alpha_{n,k}(\bmz_k) \label{eqn:individualkg}
\end{equation}
for each node $k$.
Recall that $\mathbb{Z}_{n,k}= \mathbb{Y}_{n,\calJ(k)}\times\mathbb{X}_{\calI(k)}$, where $\mathbb{Y}_{n,\calJ(k)}$ is the discrete set constituted by the outputs from the parent nodes of node $k$ that have been previously generated after $n$ iterations. The discrete nature of $\mathbb{Y}_{n,\calJ(k)}$ makes solving \eqref{eqn:individualkg} challenging.
Additionally, solving~\eqref{eqn:individualkg} presents challenges due to the presence of nested expectations that cannot be computed in closed form, as we explain below. 

To overcome the aforementioned challenges, we propose an approach to compute an approximate solution to \eqref{eqn:individualkg}. Our approach employs sample average approximation (SAA) \citep{kim2015guide,balandat2020botorch}, which substitutes $\alpha_{n,k}(\bmz_k)$ in \eqref{eqn:individualkg} with a Monte Carlo (MC) estimate that is deterministic given a set of finite number of random variables independent of~$\bmz_k$. This is similar to the approach adopted by \citet{astudillo2021bayesian}.  Additionally, to further accelerate computation, we approximate $\nu_{n+1}^*$ by maximizing $\nu_{n+1}$ over a discretization of $\mathbb{X}$, which is similar to the approaches pursued by \citet{scott2011correlated} and \citet{cakmak2020bayesian}. Pseudo-code summarizing the approximate maximization of p-KGFN can be found in Appendix~\ref{appdx:pseudo}.

\subsection{Monte Carlo Estimation of the Outer Expectation}
Recall that the outer expectation in the definition of p-KGFN is over the observation $\bmy_k = \bmf_k(\bmz_k)$ that results from observing node $k$ at $\bmz_k$. To compute this expectation, we use the \textit{reparametrization trick} \citep{kingma2013auto, wilson2018maximizing} to generate samples from the posterior distribution on $\bmf_k(\bmz_k)$. Following 
\citet{balandat2020botorch}, we call these {\it fantasy} samples. They are given by
\begin{equation*}
\hat{y}^{(i)}_k = \mu_{n,k}(\bmz_k) + \sigma_{n,k}(\bmz_k) U^{(i)}
\end{equation*}
where $U^{(i)}, \ i=1, 2, \ldots,I$ are i.i.d. standard normal random variables and $\mu_{n,k}(\cdot)$ and $\sigma_{n,k}(\cdot)$ denote the mean and standard deviation of the GP for node~$k$ at iteration $n$.  

Each fantasy sample, were it actually observed, would generate a new posterior distribution. Let $\nu_{n+1}^{(i)}(\bmx; \bmz_k)$ denote the new posterior mean of $y_K$ at $\bmx$, conditioned on having observed $\hat{y}^{(i)}_k$. An unbiased estimator of $\alpha_{n,k}(z_k)$ is then given by
\begin{equation*}
\left[\frac1I\sum_{i=1}^I \max_{\bmx} \nu_{n+1}^{(i)}(\bmx;\bmz_k) - \nu_n^*\right]/ \,c_k(\bmz_k).
\end{equation*}
\subsection{Monte Carlo Estimation of $\nu_{n+1}$}
\label{sec:postest}
We now discuss computation of $\nu_{n+1}^{(i)}(\bmx; \bmz_k)$.
To explain our approach, we first describe how to generate a sample of the objective function value $y_K(\bmx)$ under a particular posterior distribution. This approach is general, but we focus specifically on the posterior that defines $\nu_{n+1}^{(i)}(\bmx;\bmz_k)$. This is the posterior distribution that conditions on $n$ previous observations and a new observation~$\hat{y}^{(i)}_k$ of $f_k$ at $z_k$. We refer to this distribution as the {\it fantasy-$i$ posterior}.

Fix an index $j$ and let $\bmW^{(j)} = (W_1^{(j)},W_2^{(j)},\hdots,W_K^{(j)})^T \sim \mathcal{N}(0, I_K)$. 
For a generic input~$\bmx$ and the proposed point to sample~$\bmz_k$, define recursively over $\ell=1, 2, \ldots, K$,
\begin{equation}
\begin{split}
   \hat{\bmz}^{(i,j)}_\ell(\bmx;\bmz_k) 
   &:= (\hat{\bmy}^{(i,j)}_{\calJ(\ell)}(\bmx;\bmz_k),\bmx_{I(\ell)})\\
    \hat{y}^{(i,j)}_\ell(\bmx;\bmz_k) 
    &= \mu_{n+1,\ell}^{(i)}(\hat{\bmz}^{(i,j)}_\ell(\bmx;\bmz_k))\\
    &+ \sigma_{n+1,\ell}^{(i)}(\hat{\bmz}^{(i,j)}_\ell(\bmx;\bmz_k))W_\ell^{(j)},
\end{split}
\label{eqn:recursively2}
\end{equation}
where $\mu_{n+1,\ell}^{(i)}(\cdot)$ and $\sigma_{n+1,\ell}^{(i)}(\cdot)$
are the mean and standard deviation of the GP for node $\ell$
under the fantasy-$i$ posterior. We use the notation  $\hat{\bmz}^{(i,j)}_\ell(\cdot; \bmz_k)$ and $\hat{y}^{(i,j)}_\ell(\cdot; \bmz_k)$ to indicate dependence of these quantities on $U^{(i)}$, $\bmW^{(j)}$, and $\bmz_k$.

By construction, $\hat{y}^{(i,j)}_K$ is a sample from the fantasy-$i$ posterior over $y_K(x)$. Thus, we can approximate $\nu_{n+1}^{(i)}(\bmx;\bmz_k)$ by drawing many samples independently from a $K$-dimensional standard normal distribution and averaging the resulting final node value samples obtained via~\eqref{eqn:recursively2}. 
For~$J$ samples, this estimate is given by
$\frac{1}{J}\sum_{j=1}^J \hat{y}^{(i,j)}_K\left(\bmx;\bmz_k\right)$.

\subsection{Putting the Pieces Together}
We can now derive the following MC estimator of the p-KGFN acquisition function:
{ \footnotesize 
\begin{align}
\hat{\alpha}_{n,k}(\bmz_k)=
     \frac{
     \frac{1}{I}\sum_{i=1}^{I}
     \max\limits_{\bmx\in\mathbb{X}} 
     \frac{1}{J}\sum_{j=1}^{J} 
     \hat{y}^{(i,j)}_K\left(\bmx;\bmz_k\right) -\nu^*_{n}
     }{c_k(\bmz_k)}.
\label{eqn:oneshotKG}
\end{align} 
}

We emphasize that the SAA approach relies on fixing the samples $U^{(i)}, \ i=1, 2, \ldots, I$, and  $\bmW^{(j)}, \ j=1, 2, \ldots, K$ that drive the above MC approximation as opposed to generating new samples for each $x$. Thus, the maximization of $\hat{\alpha}_{n,k}(\bmz_k)$ can be seen as a deterministic optimization problem, and its solution as an estimator of the maximizer of $\alpha_{n,k}(\bmz_k)$ (defined in \eqref{eqn:kgfn}). Theorem~\ref{thm:mainthm} shows that this estimator does indeed converge to a maximizer of $\alpha_{n,k}(\bmz_k)$ almost surely as the number of samples increases to infinity. We note that this result requires $J$ to depend on $I$, so we write $J(I)$ to make this dependence explicit. The proof of Theorem~\ref{thm:mainthm} can be found in Appendix~\ref{appdx:proof}.
\begin{restatable}{theorem}{mainthm}
Assume that the prior means $\mu_{0,k'}(\cdot)$ and variances $\sigma_{0,k'}(\cdot)$ are continuous and bounded for all nodes $k'$, that $\mathbb{X}$ and $\mathbb{Z}_{n,k'}$ are compact, and that $\inf_{z\in\mathbb{Z}_{n,k'}} c_{k'}(z)>0$, for all $k'$.
   Consider any node $k$ and write $\hat{\alpha}_{n,k}(z)$ as $\hat{\alpha}_{n,k,I,J(I)}(z)$ to make the dependence on $I$ and $J$ explicit. Then, $\hat{\varphi}_{I,J(I)}:=\max_{z\in\mathbb{Z}_{n,k}}\hat{\alpha}_{n,k,I,J(I)}(z)$ converges to $\varphi^*:=\max_{z\in\mathbb{Z}_{n,k}}\alpha_{n,k}(z)$ almost surely as $I\rightarrow\infty$ where $J$ is a function of $I$ such that $\lim_{I\rightarrow\infty}J(I)=\infty$. Moreover, let $\hat{z}_{I,J(I)}\in\arg\max_{z\in\mathbb{Z}_{n,k}}\hat{\alpha}_{n,k,I,J(I)}(z)$ and $Z^*=\arg\max_{z\in\mathbb{Z}_{n,k}}\alpha_{n,k}(z)$. Then, the distance between $\hat{z}_{I,J(I)}$ and $Z^*$ converges to zero almost surely as $I\rightarrow\infty$.
   \label{thm:mainthm}
\end{restatable}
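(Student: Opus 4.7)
The overall strategy is to establish that the SAA objective $\hat{\alpha}_{n,k,I,J(I)}$ converges uniformly almost surely to the true acquisition function $\alpha_{n,k}$ on the compact feasible set $\mathbb{Z}_{n,k}$, and then invoke a standard argmax-continuity result (e.g., Rockafellar and Wets, \emph{Variational Analysis}, Thm.~7.33) to transfer this uniform convergence to convergence of the optimal value $\hat{\varphi}_{I,J(I)}\to\varphi^*$ and of the distance $d(\hat{z}_{I,J(I)},Z^*)\to 0$.

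First I would check the regularity of the sample paths $\hat y^{(i,j)}_K(x;z_k)$ defined by the recursion \eqref{eqn:recursively2}. The assumed continuity and boundedness of $\mu_{0,k'}$ and $\sigma_{0,k'}$ propagate through the GP posterior update and are preserved by the recursion (which only composes continuous functions), so on the compact set $\mathbb{X}\times\mathbb{Z}_{n,k}$ each $\hat y^{(i,j)}_K(\cdot;\cdot)$ is continuous, and the relevant suprema are dominated by integrable functions of the Gaussian vectors $U^{(i)},W^{(j)}$ (whose moments are all finite). This hands me the hypotheses needed for a uniform strong law of large numbers (Jennrich-style ULLN) on compact sets.

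I would then apply the ULLN twice. \emph{Inner step}: for each fixed fantasy $U^{(i)}$, as $J\to\infty$, $\frac1J\sum_{j=1}^J \hat y^{(i,j)}_K(x;z_k)\to \nu_{n+1}^{(i)}(x;z_k)$ uniformly in $(x,z_k)$ almost surely; combined with compactness of $\mathbb{X}$, continuity of the $\max$ operator then yields $\tilde\nu_{n+1}^{*(i,J)}(z_k)\to \nu_{n+1}^{*(i)}(z_k)$ uniformly in $z_k$. \emph{Outer step}: continuity of $\nu_{n+1}^{*(i)}(z_k)$ in $z_k$ (again by continuity of the $\max$ over compact $\mathbb{X}$) together with a Gaussian-moment dominating function lets me apply the ULLN to the outer average, so $\frac1I\sum_{i=1}^I\nu_{n+1}^{*(i)}(z_k)\to\mathbb{E}_{U}[\nu_{n+1}^{*}(z_k)]$ uniformly in $z_k$ almost surely. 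A triangle inequality combining the two, together with the hypothesis $J(I)\to\infty$, gives uniform a.s.\ convergence of the numerator of $\hat\alpha_{n,k,I,J(I)}(z_k)$; dividing by $c_k(z_k)$, which is bounded below by a positive constant on $\mathbb{Z}_{n,k}$, preserves this uniform convergence. Feeding the result into the argmax-continuity theorem closes the argument for both conclusions.

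The main obstacle is the combination step, because $\tilde\nu_{n+1}^{*(i,J(I))}(z_k)$ mixes two independent Monte Carlo errors and the inner sample count $J(I)$ is coupled to the outer count $I$, so the summands indexed by $i$ are neither i.i.d.\ nor independent of the sampling error one is trying to control. Bounding
\[
\sup_{z_k\in\mathbb{Z}_{n,k}}\Bigl|\tfrac1I\sum_{i=1}^I\tilde\nu_{n+1}^{*(i,J(I))}(z_k)-\tfrac1I\sum_{i=1}^I\nu_{n+1}^{*(i)}(z_k)\Bigr|
\]
requires either (a) a dominated-convergence-type argument on the product sample space that exploits the a.s.\ uniform inner convergence for each fixed $i$ together with an integrable envelope, or (b) a quantitative ULLN that makes the inner error uniform over $i\le I$. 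This is the step where the qualitative requirement $J(I)\to\infty$ (no rate needed) is actually used; everything else is essentially an assembly of continuity, compactness, and uniform laws of large numbers.
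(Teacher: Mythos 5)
Your architecture matches the paper's at every structural level: continuity and domination lemmas for the sample paths, a uniform law of large numbers to obtain almost-sure uniform convergence of $\hat{\alpha}_{n,k,I,J(I)}$ on the compact set $\mathbb{Z}_{n,k}$, and then an optimal-value/argmax continuity argument (the paper proves this last step by hand, treating the unique- and multiple-maximizer cases separately, rather than citing Rockafellar--Wets, but that difference is cosmetic). The genuine gap is precisely the step you flag as ``the main obstacle'': you name two candidate strategies for the combination step but execute neither, and your inner step as stated --- uniform convergence in $(x,z_k)$ for each \emph{fixed} fantasy $U^{(i)}$ --- does not yield what the combination needs. For each $I$ the outer average involves $I$ fresh draws $U_1,\dots,U_I$ evaluated at inner sample size $J(I)$, so one needs the inner error to be small \emph{simultaneously} over all $I$ draws; a per-fixed-$u$ almost-sure statement provides only a random threshold $J_0(u,\omega)$ whose supremum over the unbounded Gaussian range of $u$ need not be finite, and since $U$ is unbounded one cannot simply absorb $u$ into the compact index set of a Jennrich-style ULLN.

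The paper closes exactly this hole with a truncation device absent from your sketch. It first proves the explicit envelope $|y_K(U,W,x,z)| \le c_0 + c_1|W_K| + c_2|U|$ (Lemma~\ref{lemma:yK_bound}), linear in $|U|$, via a case analysis on whether the evaluated node is the final node ($k=K$, where the fantasized $U$ enters only linearly through the posterior-mean update) or an earlier node ($k<K$, where it does not enter the final-node posterior at all). Then, for fixed $r$, it applies the uniform SLLN of Rubinstein and Shapiro with $u$ treated as an \emph{additional argument} ranging jointly with $x$ over the compact set $\mathbb{X}\times[-r,r]$; this makes the inner Monte Carlo error uniformly small over all fantasy draws with $|U_i|\le r$, which is what the coupling between $I$ and $J(I)$ actually requires. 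The truncation error is then controlled on both sides: the population tail via explicit Gaussian computations applied to the linear envelope (terms such as $\mathbb{E}[\mathbf{1}_{\{|U|>r\}}|U|]$), and the empirical tail $\frac{1}{I}\sum_{i}\mathbf{1}_{\{|U_i|>r\}}\,(\cdot)$ via the ordinary SLLN. Your option (b), a ``quantitative ULLN uniform over $i\le I$,'' is morally this argument, but without the truncation of $U$ and the linear-growth envelope it is not available as stated; your option (a), dominated convergence on the product space, does not by itself deliver uniformity over the growing family of fantasies. As written, therefore, the proposal's central step remains unproved, even though everything surrounding it is sound and coincides with the paper's route.
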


\subsection{ Discretization of the Inner Problem}\label{subsec:discretization}
To speed up the maximization of $\hat{\alpha}_{n,k}(\bmz_k)$, we discretize the set over which we take the maximum for each fantasy-$i$ posterior in \eqref{eqn:oneshotKG}. I.e., rather than solving the inner maximization problem in~\eqref{eqn:oneshotKG} over the continuous domain $\mathbb{X}$, we instead solve it over a discrete set $\mathbb{A}$.  Similar discretization approaches have been proposed in the literature \citep{scott2011correlated,ungredda2022efficient}.


The set $\mathbb{A}$ can be chosen through several heuristic approaches. Here, we choose $\mathbb{A}$ by taking into account two goals: exploring the promising domain based on the current statistical model, and exploiting the location of the current inferred best solution $\bmx^*_n$. Hence, in each iteration, we form $\mathbb{A}$ using candidates generated by combining the following approaches: First, we draw $N_T$ realizations from the posterior on $y_K$ and include the maximizers of these realizations in $\mathbb{A}$.
Second, we randomly generate $N_L$ local points around $\bmx^*_n$. 
We define a local point $\bmx\in\mathbb{X}$ to be one for which $d(\bmx,\bmx^*_n)\leq r\max_{i=1,2,\hdots,d}(b_i-a_i),$ where $a_i$ and $b_i$ are the lower and upper bounds of $i^{th}$ dimension input, respectively, and $r$ is a positive hyperparameter. Finally, we also include the point $\bmx_n^*$ itself in $\mathbb{A}$. 

An alternative approach to the discretization-based approach described above is to optimize $\hat{\alpha}_{n,k}(\bmz_k)$ in a ``one-shot'' fashion \citep{balandat2020botorch} by introducing a \textit{fantasy variable} $\bmx^{(i)}$ for each index $i$  and then maximizing
\begin{align}
     \frac{\frac{1}{IJ}\sum_{i=1}^{I}
     \sum_{j=1}^{J} 
     \hat{y}^{(i,j)}_K\left(\bmx^{(i)};\bmz_k\right) -\nu^*_{n}}{c_k(z_k)}.
\end{align}  However, this approach results in an optimization problem where the dimension grows linearly in $I$, which in turn results in a substantial increase in computation time. In Appendix~\ref{appdx:OSvsDiscrete}, we compare one-shot optimization and the discretization-based approach we propose below in terms of compute time and solution quality.

\subsection{Further Details}
We maximize $\hat{\alpha}_{n,k}(\bmz_k)$ by enumerating all available (previously evaluated) $\bmy_{\calJ(k)}\in\mathbb{Y}_{n,\calJ(k)}$ and maximize the acquisition function over $x_{\calI(k)}\in\mathbb{X}_{\calI(k)}$ for each $\bmy_{\calJ(k)}$ using gradients with respect to $x_{\calI(k)}$, which we compute using auto-differentiation.  Since $\hat{\alpha}_{n,k}(\bmz_k)$ is deterministic, we use (quasi-)higher-order gradient-based methods, which have been shown to be fast and effective acquisition function optimizers \citep{daulton2020}. We emphasize that this approach is trivially parallelizable: each maximization problem $\max_{\bmz_k}\hat{\alpha}_{n,k}(\bmz_k)$ for each previously evaluated $\bmy_{\calJ(k)}$ \emph{and} for each $k$ can be solved independently and in parallel. Hence, the (wall-)time complexity for optimizing p-KGFN for each previously evaluated $\bmy_{\calJ(k)}$ \emph{and} for each $k$ is the same as solving for a single node $k$ from a single starting point $\bmy_{\calJ(k)}$, given enough parallel compute resources.

\section{Numerical Experiments}
\label{sec:experiments}
We evaluate p-KGFN against several benchmarks, including three algorithms that do not leverage the objective's function network structure: a random sampling baseline (Random), standard versions of expected improvement (EI) and knowledge gradient (KG), and three algorithms that do leverage network structure but require evaluation of the full network: EIFN \cite{astudillo2021bayesian}, a slight modification of EIFN that uses the knowledge gradient instead of EI (KGFN),\footnote{KGFN has not been previously proposed in the literature. Our work is thus the first to describe KG policies for function networks with both partial and full evaluations.} and Thompson sampling for function networks (TSFN). While both p-KGFN and KGFN are one-step lookahead policies, KGFN considers full function network evaluations, whereas p-KGFN obtains one additional observation at one specific node. 
TSFN represents a simple acquisition function leveraging network structures constructed by a series of GP realizations sampled from the nodes' posterior distributions. All algorithms were implemented in BoTorch \citep{balandat2020botorch}. The code to reproduce our experiments is available at \url{https://github.com/frazier-lab/partial_kgfn}.

We assess performance on several function network structures, including single sequential networks, a multi-process network, and a multi-output network. Specifically, we explore two synthetic functions inspired by typical networks in materials design and manufacturing operations, as well as two real-world applications. In our experiments, we consider problems where the upstream nodes must be evaluated before downstream nodes. In this setting, exploiting partial evaluations is usually beneficial when the upstream node is cheaper to evaluate than the downstream node. When the situation is reversed, there are limited gains from using partial evaluations since the expensive upstream nodes must be evaluated before each evaluation of the cheaper downstream nodes. Motivated by real-world scenarios, here we focus on the case where initial nodes are cheaper than later nodes. Moreover, we consider problems without the upstream restriction in Appendix~\ref{app:addexp}.

In all experiments, each algorithm begins with $2d+1$ points chosen at random over the input space $\mathbb{X}\subseteq\mathbb{R}^d$.
Each point $\bmx\in\mathbb{X}$ is fully evaluated across the entire network (i.e., we observe $y_k(\bmx)$ for $k=1,\hdots, K$). Then, at each iteration, each algorithm sequentially selects a point to evaluate. 
All six baselines choose a point $\bmx\in\mathbb{X}$ and evaluate the entire network. In contrast, p-KGFN can take advantage of partial evaluations by selecting both a node $k$ and its input $\bmz_k\in\mathbb{Z}_{n,k}$ to evaluate at each iteration. 
All experiments discussed in this section are noise-free. We conduct an additional experiment with noisy observations in Appendix~\ref{appdx:addtl_exp_noisy} to show the robustness of p-KGFN.

We evaluate the performance of each algorithm by reporting at each iteration the ground truth value of $y_K(\bmx^*_n)$, where $\bmx^*_n\in \argmax_{x\in\mathbb{X}}\nu_n(x)$. To highlight the benefits of partial evaluations, we utilize a posterior distribution for the final node value $y_K$ obtained from a statistical model that incorporates the network structure discussed in Section~\ref{sec:statmodel} to compute the metric for all algorithms. Note that this favors algorithms such as EI, KG, and Random, which make decisions without leveraging the network structure (results when using a model not incorporating network structure are presented in Appendix~\ref{app:alter}). Averaging over 30 replications, we report the mean of this metric with the error bars showing two standard errors.

\subsection{Synthetic Test Problems}
\label{subsec:syn}
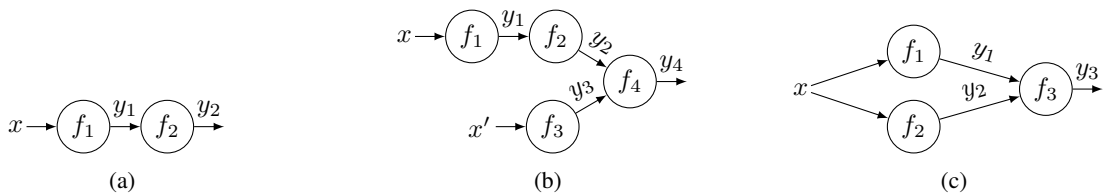
\begin{figure*}[!t]
\centering
\begin{subfigure}{0.33\textwidth}
\centering
\begin{tikzpicture}[
init/.style={
  draw,
  circle,
  inner sep=0.7pt,
  minimum size=0.7cm
},
init2/.style={
  circle,
  inner sep=0.2pt,
  minimum size=0.2cm
},
]
\begin{scope}[start chain=1,node distance=4mm]
\node[on chain=1, init2]
(x1) {$\bmx$};
\node[on chain=1, init] 
  (f1) {$f_1$};
\node[on chain=1,init]
 (f2) {$f_2$};
\node[on chain=1,init2] (f3) 
  {};
\end{scope}

\draw[-latex] (f1) -- (f2)node[pos=0.5,sloped,above] {$y_1$};
\draw[-latex] (f2) -- (f3)node[pos=0.5,sloped,above] {$y_2$};
\draw[-latex] (x1) -- (f1);
\end{tikzpicture}
\caption{}
\label{fig:eggandfree}
\end{subfigure}%
\begin{subfigure}{0.33\textwidth}
\centering
\begin{tikzpicture}[
init/.style={
  draw,
  circle,
  inner sep=0.7pt,
  minimum size=0.7cm
},
init2/.style={
  circle,
  inner sep=0.2pt,
  minimum size=0.2cm
},
]
\begin{scope}[start chain=1,node distance=4mm]
\node[on chain=1, init2]
(x1) {$\bmx$};
\node[on chain=1, init] 
  (f1) {$f_1$};
  \node[on chain=1, init]
  (f2) {$f_2$};
\end{scope}
\begin{scope}[start chain=2,node distance=4mm]
\node[on chain=2,init] (f4) at (3,-6mm) {$f_4$};
\node[on chain=2,init2] (f5) 
  {};
\end{scope}
\begin{scope}[start chain=3,node distance=4mm]
\node[on chain=3, init2] at (1,-12mm)
(x2) {$\bmx'$};
\node[on chain=3,init]
 (f3) {$f_3$};
\end{scope}

\draw[-latex] (f1) -- (f2)node[pos=0.5,sloped,above] {$y_1$};
\draw[-latex] (f2) -- (f4)node[pos=0.5,sloped,above] {$y_2$};
\draw[-latex] (x1) -- (f1);
\draw[-latex] (x2) -- (f3);
\draw[-latex] (f3) -- (f4)node[pos=0.5,sloped,above] {$y_3$};
\draw[-latex] (f4) -- (f5)node[pos=0.5,sloped,above] {$y_4$};
\end{tikzpicture}
\caption{}
\label{fig:manu}
\end{subfigure}
\begin{subfigure}{0.3\textwidth}
\centering
\begin{tikzpicture}[
init/.style={
  draw,
  circle,
  inner sep=0.7pt,
  minimum size=0.7cm
},
init2/.style={
  circle,
  inner sep=0.2pt,
  minimum size=0.2cm
},
]
\begin{scope}[start chain=1,node distance=4mm]

\node[on chain=1, init] 
  (f1) at(1,-1mm){$f_1$};
\end{scope}
\begin{scope}[start chain=2,node distance=4mm]
\node[on chain=2, init2]
(x1)at(-0.5,-6mm){$\bmx$};
\node[on chain=2,init] (f3) at (2,-6mm) {$f_3$};
\node[on chain=2,init2] (f4) 
  {};
\end{scope}
\begin{scope}[start chain=3,node distance=4mm]
\node[on chain=3,init]
 (f2) at(1,-11mm){$f_2$};
\end{scope}

\draw[-latex] (f1) -- (f3)node[pos=0.5,sloped,above] {$y_1$};
\draw[-latex] (f2) -- (f3)node[pos=0.5,sloped,above] {$y_2$};
\draw[-latex] (x1) -- (f1);
\draw[-latex] (x1) -- (f2);
\draw[-latex] (f3) -- (f4)node[pos=0.5,sloped,above] {$y_3$};
\end{tikzpicture}
\caption{}
\label{fig:pharma}
\end{subfigure}
\caption{Function network structures in the numerical experiments: (a) Ackley and FreeSolv, (b) Manu-GP, and (c) Pharma.}
\label{fig:problemnetwork}
\end{figure*}
\begin{figure*}[!h]
    \centering
    \includegraphics[width=0.95\textwidth]{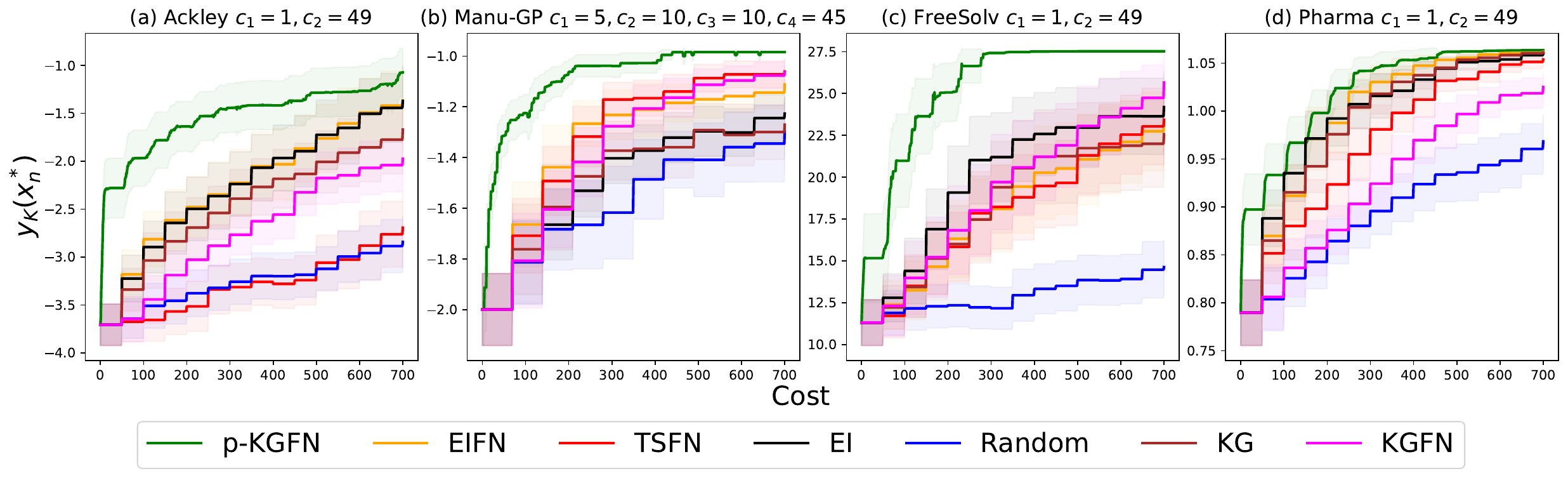}
    \caption{Optimization performance comparing between our proposed p-KGFN and benchmarks including EIFN, KGFN, TSFN, EI, KG and Random on four experiments: (a) Ackley, (b) Manu-GP, (c) FreeSolv, and (d) Pharma.}
    \label{fig:performance}
\end{figure*}
\begin{figure*}[!h]
    \centering
    \includegraphics[width=0.7\textwidth]{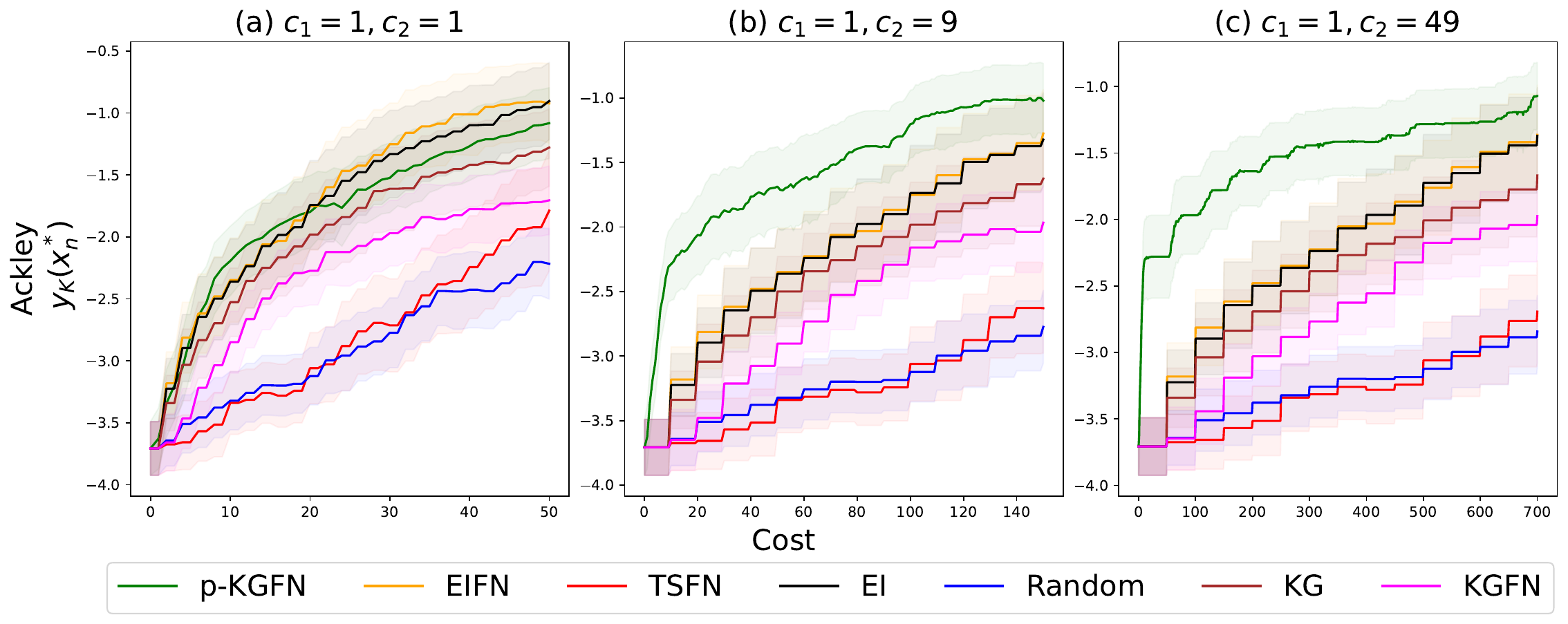}
    \caption{Cost sensitivity analysis for Ackley problem with different costs (a) $c_1 = 1, c_2 = 1$; (b) $c_1=1, c_2=9$; and (c) $c_1 = 1, c_2 = 49$.}
    \label{fig:AckScost}
\end{figure*}

\paragraph{Ackley6D (Ackley)} This problem is structured as a two-stage function network (Figure \ref{fig:eggandfree}), where the first stage takes a 6-dimensional input and the second stage takes as an input the output of the first stage. The node function $f_1$ is the negated Ackley function \citep{jamil2013literature}, and $f_2$ is given by $f_2(y_1) = -y_1\times\sin(5y_1/6 \pi)$. This network structure is commonly found in materials design, sequential processes, and multi-fidelity settings. In many applications, the early stages in a path through the function network are cheaper to evaluate than subsequent stages. For example, the first node can be an approximation or partial evaluation of a subsequent node. We therefore assume the costs are given by $c_1 = 1$ and $c_2=49$.

\paragraph{Manufacturing Network (Manu-GP)} Motivated by the manufacturing application discussed in Section~\ref{sec:intro}, we build a two-process network where the outputs of the two processes are combined at a final node (Figure \ref{fig:manu}). The first process has two sequential nodes and its initial input is 2-dimensional. The second process has one node which takes a different 2-dimensional input. This network structure is typical in scenarios where individual components are produced through independent processes and combined to create a final product (e.g., chemical synthesis processes). In this experiment, we employ a sample path drawn from a GP prior for each function node. This is intended to emulate the variations in the characteristics of intermediate/final products, with respect to different design parameters. Since different processes typically have different levels of complexity, resulting in heterogeneous costs, we assume that $c_1 = 5$ and $c_2=10$ in the first process, and $c_3=10$ in the second process. As the final stage usually involves both component assembly and product quality assessment, we assume the final stage incurs a relatively high cost, $c_4=45$.

\subsection{Real-World Applications}
\label{subsec:experiments:realapp}

\paragraph{Molecular Design (FreeSolv)} We consider the FreeSolv dataset \citep{mobley2014freesolv}, which consists of calculated and experimental hydration-free energies of 642 small molecules. A continuous representation is derived from the SMILES representation of each molecule through a variational autoencoder model \citep{gomez2018automatic}. We apply principal component analysis (PCA) to reduce the dimension of these representations to three. In the context of materials design, our objective is to minimize the experimental free energy. We formulate this problem as a two-stage function network (Figure~\ref{fig:eggandfree}). The first node takes the 3-dimensional representation of molecules as input and outputs the negative calculated free energy. The second node takes this output as input and returns the negative experimental free energy, which is our target for maximization. We fit GP surrogate models for both nodes based on the entire available dataset, which allows us to consider a continuous optimization domain. As in the Ackley problem, we assume $c_1=1$ and $c_2=49$.

\paragraph{Pharmaceutical Product Development (Pharma)} 

An orally disintegrating tablet (ODT) is a drug dosage form designed to dissolve on the tongue.
To ensure the production of high-quality ODTs, one must consider two crucial properties: disintegration time $(f_1)$ and tensile strength ($f_2$). 
We employ surrogate models proposed in \citet{sano2020application} for these two target properties as functions of four input variables in the production process.  
In the same study, a simple deterministic score function $(f_3)$, which combines these two targets, is proposed to measure the ODT quality (Figure~\ref{fig:pharma}). 
Our goal is to find the four input variables that maximize the score produced by node~$f_3$. 
While both target properties are equally significant in determining the quality of the ODT, their measurements typically involve different levels of complexity. To reflect this, 
costs are set at $c_1=1$ for $f_1$ and $c_2=49$ for $f_2$. The $f_3$ score function, being already known and cost-free to evaluate, is not included as a node in the p-KGFN optimization process. 

\subsection{Results and Discussion}\label{sec:discussion}

Figure~\ref{fig:performance} shows the performance of p-KGFN compared to various baselines across different experiments. While the baselines require full evaluations of the function networks, resulting in regular-spaced steps in their performance curves, p-KGFN operates without this limitation. We note that p-KGFN outperforms all baselines across all experiments.

In the FreeSolv and Ackley experiments, similar to the toy example from  Section~\ref{subsec:advantage_partial}, p-KGFN demonstrates strategic budget allocation by learning the first function node thoroughly before deciding to explore the second node, and only in regions with high potential values.  
To assess the impact of evaluation costs on performance, we conduct a sensitivity analysis where we vary the cost of evaluating the second node in two-node network problems, as shown in Figure~\ref{fig:AckScost} and detailed in Appendix~\ref{appdx:sensitivity}. We observe that when the costs of the two nodes are equal, p-KGFN typically performs full evaluations, achieving performance levels comparable to baseline algorithms. The advantages of partial evaluations become more pronounced as the evaluation cost of the second node increases.

Comparing results across different test problems, we note that p-KGFN achieves more significant performance gains in scenarios where downstream nodes in the function network show strong correlations with their parent nodes (as seen in the Ackley and FreeSolv experiments) and involve higher evaluation costs. Such dynamics are common in real-world settings, where an upstream node might simulate a scenario that is physically tested in a downstream node. To explore the robustness of p-KGFN, we conduct additional experiments where upstream nodes are more challenging to model than downstream nodes, detailed in Appendix~\ref{appdx:addtl_exp_hard_downstream}. In such experiments, p-KGFN's performance remains on par with other benchmarks, confirming its effectiveness across diverse problem settings.

Overall, the networks evaluated in the numerical experiments showcase p-KGFN's capability to effectively manage a diverse range of network structures. These include sequential networks, multi-process networks, and multi-output networks with clearly defined objectives.

\section{Conclusion}
\label{sec:conclusion}

In this work, we considered Bayesian optimization of objectives represented by a network of functions, where individual nodes in the network can be evaluated independently. We proposed a new acquisition function for this class of problems, p-KGFN, that leverages the objective's function network structure along with the ability to evaluate individual nodes to improve sampling efficiency. Our numerical experiments on both synthetic and real-world problems demonstrate that our approach can significantly reduce evaluation costs and provide higher-quality solutions.

While our method offers substantial benefits through partial evaluations, it is also subject to some limitations. Specifically, optimizing the p-KGFN acquisition function requires considerable computational resources as it considers all nodes and available outputs at each iteration, which could be challenging for large networks (the average runtime for optimizing each acquisition function is reported in Appendix~\ref{supp:wallclock}).
Nonetheless, in many real-world scenarios where evaluation costs are high, the savings achieved through an improved query strategy significantly outweigh the additional computational time. Additionally, it may be possible to extend and integrate the stock reduction technique from \citet{kusakawa2022bayesian}, designed for cascade-type networks, to a broader range of network structures to reduce the number of optimization problems considered in p-KGFN. We leave this as a direction for future work. Finally, our algorithm, like other knowledge-gradient-based algorithms, looks only a single step ahead. An interesting research direction would be to explore multi-step lookahead acquisition functions \cite{jiang2020efficient} for function networks, though this would further increase the computational cost. 

\section*{Acknowledgements}
We would like to thank the anonymous reviewers for their comments. We also thank Eytan Bakshy for his valuable feedback. PB would like to thank DPST scholarship project granted by IPST, Ministry of Education, Thailand for providing financial support. PF and JW were supported by AFOSR FA9550-19-1-0283 and FA9550-20-1-0351.

\section*{Impact Statement}
This paper presents work whose goal is to advance the field of Machine Learning. There are many potential societal consequences of our work, none which we feel must be specifically highlighted here.

\bibliography{ref}
\bibliographystyle{icml2024}
\newpage
\appendix
\onecolumn

\section{Pseudo-Code for the p-KGFN Algorithm}\label{appdx:pseudo}

We present the pseudo-code for implementing Bayesian optimization with the p-KGFN acquisition function, supplementing the descriptions in Section~\ref{sec:optKG}. Algorithm~\ref{algo:bo_loop} outlines the BO loop employing the p-KGFN algorithm. Algorithm~\ref{algo:KGFN} describes the computation of the MC estimate of the acquisition value. Algorithm~\ref{algo:SAA} describes how we estimate the posterior mean of the final function node via MC simulation, which is necessary for Algroithm~\ref{algo:KGFN}. We defer the discussion of acquisition optimization to Appendix~\ref{appdx:opt}, where we compare our implementation, optimization-via-discretization, against a commonly used approach, one-shot optimization.

\begin{algorithm}[]
    \caption{Bayesian Optimization using p-KGFN} \label{algo:bo_loop}
    {\bfseries Input:}\\
    $c_k(\cdot)$, the evaluation cost function for node $k$, $k=1,\ldots, K$\\
    $B$, the total evaluation budget\\
    $\mu_{0, k}(\cdot)$ and $\sigma_{0, k}(\cdot)$, the mean and standard deviation of the GP for node $k$, $k=1,\ldots, K$ (fitted using initial observations)\\
    {\bfseries Output:} the point with the largest posterior mean at the final function node
    \begin{algorithmic}[1]
    \STATE $n\leftarrow 0$ 
    \STATE $b\leftarrow 0$
    \WHILE{$b < B$}
        \STATE $n\leftarrow n + 1$
        \FOR{$k=1,\ldots, K$}
            \STATE identify the set of combinations of previously evaluated $y_{\calJ(k)}$, $\mathbb{Y}_{n,\calJ(k)}$
            \IF{$\mathbb{Y}_{n,\calJ(k)}=\emptyset$}
            \STATE $\hat{\alpha}_{n, k}^{*}\leftarrow -1$
            \ELSE
            \STATE $\hat{\alpha}_{n, k}^{*}\leftarrow \max_{z\in \mathbb{Z}_{n,k}}\hat{\alpha}_{n, k}(z)$ where $\hat{\alpha}_{n,k}(\cdot)$ is computed using Algorithm~\ref{algo:KGFN}
            \STATE $z_k^* \leftarrow \argmax_{ z\in \mathbb{Z}_{n,k}} \hat{\alpha}_{n,k}(z)$
            \IF{$c_k(z_k^*) > B - b$}
            \STATE{$\hat{\alpha}_{n, k}^{*}\leftarrow -1$}
            \ENDIF
            \ENDIF
        \ENDFOR
        \IF{$\max_k\hat{\alpha}_{n, k}^{*} = -1$}
        \STATE \textbf{break}
        \ELSE
        \STATE $k^* \leftarrow \argmax_{k\in \{1,\ldots, K\}}\hat{\alpha}_{n, k}^{*}$
        \STATE obtain $y_{k^*} = f_{k^*}(z^*_{k^*})$
        \STATE update the GP model for node $k^*$ with the additional observation $(z^*_{k^*}, y_{k^*})$
        \STATE $b\leftarrow b + c_{k^*}(z_{k^*}^*)$
        \ENDIF
    \ENDWHILE
    \end{algorithmic}
    \textbf{return} $\argmax_{x\in\mathbb{X}}\hat{\nu}_n(x)$, an estimate of $\nu_{n}(\bmx)$ given in Algorithm~\ref{algo:SAA} using a gradient-based method
\end{algorithm}
\vfill
\newpage

\begin{algorithm}
    \caption{MC Estimate of $\alpha_{n,k}(\bmz_k)$}\label{algo:KGFN}
    {\bfseries Input:} \\
     $k$, the node to be evaluated \\
    $\bmz_k$, the input for node $k$ \\
    $c_k(\cdot)$, the evaluation cost function for node $k$ \\
    $I$, the number of fantasy observations to create\\
    $J$, the number of MC samples for estimating the posterior mean of the final function node\\
    $\mu_{n, k}(\cdot)$ and $\sigma_{n, k}(\cdot)$,  the mean and standard deviation of the GP for node $k$, $k=1,\ldots, K$ \\
    {\bfseries Output:} $\hat{\alpha}_{n,k}(\bmz_k)$, the estimated acquisition value
    \begin{algorithmic}[1]
        \STATE solve $\max_{\bmx\in\mathbb{X}}\hat{\nu}_{n}(\bmx)$, an estimate of $\nu_{n}(\bmx)$ given in Algorithm~\ref{algo:SAA} using a gradient-based method and obtain $\hat{\nu}_{n}^*$
        \STATE generate $I$ independent standard normal random variables $U^{(i)}$, $i=1,\ldots, I$
        \FOR{$i=1, \ldots, I$}  
            \STATE $\hat{\bmy}_k^{(i)} \leftarrow \mu_{n, k}(\bmz_k) + \sigma_{n,k}(\bmz_k){U}^{(i)}$
            \STATE update the posterior of GP for node $k$ with the additional observation $\left(\bmz_k,\hat{\bmy}_k^{(i)}\right)$

            \STATE solve $\max_{\bmx\in\mathbb{X}}\hat{\nu}^{(i)}_{n+1}(\bmx)$ to obtain $\hat{\nu}_{n+1}^{(i)*}$ for fantasy-$i$ model 
             using Algorithm~\ref{algo:SAA}

        \ENDFOR
        \STATE $\hat{\alpha}_{n,k}(\bmz_k) \leftarrow \frac{1}{c_k(\bmz_k)} \left(\frac{1}{I}\sum_{i=1}^{I}\hat{\nu}^{(i)*}_{n+1} - \hat{\nu}_n^*\right)$ 
        \STATE \textbf{return} $\hat{\alpha}_{n,k}(\bmz_k)$
    \end{algorithmic}
\end{algorithm}
Note that in Line 6 of Algorithm~\ref{algo:KGFN}, the samples $W^{(j)}$ used in Algorithm~\ref{algo:SAA} are shared across the MC approximation for all fantasy-$i$ models.

\begin{algorithm}[h]
    \caption{Posterior Mean Estimate of the Final Function Node via MC Simulation}\label{algo:SAA}
    {\bfseries Input:} \\
    $\bmx\in\mathbb{X}$, a design point of the function network \\
    $J$, the number of MC samples \\
    $\mu_{k}(\cdot)$ and $\sigma_{k}(\cdot)$, the mean and standard deviation of the GP for node $k$, for $k=1,\ldots, K$\\
    {\bfseries Output:} $\hat{\nu}(x)$, the estimated posterior mean
    \begin{algorithmic}[1]
    \STATE generate $J$ independent samples $\bmW^{(j)} = \left(W_1^{(j)},W_2^{(j)},\hdots,W_K^{(j)}\right)^T $ for $j=1,\ldots, J$ from $\mathcal{N}(0, I_K)$; 
    \FOR{$j=1, \ldots, J$}
        \FOR{$k=1,\ldots,K$}
            \STATE define $\hat{\bmz}^{(j)}_k(\bmx) := (\hat{\bmy}^{(j)}_{\calJ(k)}(\bmx),\bmx_{\calI(k)})$
            \STATE $\hat{y}^{(j)}_k(\bmx) \leftarrow \mu_{k}(\hat{\bmz}^{(j)}_k(\bmx))+ \sigma_{k}(\hat{\bmz}^{(j)}_k(\bmx))W_k^{(j)}$
        \ENDFOR
    \ENDFOR
    \STATE $\hat{\nu}(\bmx)\leftarrow\frac{1}{J}\sum_{j=1}^{J}\hat{\bmy}^{(j)}_K(\bmx)$
    \STATE \textbf{return} $\hat{\nu}(\bmx)$
    \end{algorithmic}
\end{algorithm}

\section{Proof of Theorem~\ref{thm:mainthm}}
\label{appdx:proof}
In this section, we prove the following theorem.
\mainthm*
Our proof is mainly based on Lemma A1 and Theorem A1 in \citetAP{rubinstein1993discrete}. We will assume throughout this section the assumptions in the statement of Theorem~\ref{thm:mainthm} --- that the prior means $\mu_{0,k'}(\cdot)$ and variances $\sigma_{0,k'}(\cdot)$ are continuous and bounded for all $k'$, that $\mathbb{X}$ and $\mathbb{Z}_{n,k'}$ are compact, and that the cost $c_{k'}(z)$ is bounded below away from $0$.

To support this proof, we first introduce the notation used in the statements and proofs of the lemmas below. Observe that the posterior distribution of $f_k$ given the data up to time $n$ and an additional observation of $f_k$ at $z$ is a Gaussian process whose posterior mean and posterior variance can be written using the standard Gaussian process regression formulas
\citepAP{williams2006gaussian}:
\begin{equation*}
\mu_{n+1,k}(z')
= \mu_{n,k}(z')
+ \Sigma_{n,k}(z',z)(\Sigma_{n,k}(z,z)+\lambda)^{-1}(f_k(z)-y_{n,k}(z)),
\end{equation*}
and
\begin{align*}
\sigma_{n+1,k}(z')
&= \Sigma_{n,k}(z',z') - 
  \Sigma_{n,k}(z',z)(\Sigma_{n,k}(z,z)+\lambda)^{-1}\Sigma_{n,k}(z',z)\\
&= 
\Sigma_{n,k}(z',z') - \Sigma_{n,k}(z',z)^2 / (\Sigma_{n,k}(z,z)+\lambda).
\end{align*}

Let $U = (f_k(z)-y_{n,k}(z))/\sqrt{\Sigma_{n,k}(z,z)+\lambda}$, which is a standard normal random variable. Then, we can rewrite $\mu_{n+1, k}$ as
\begin{equation*}
\mu_{n+1,k}(z')
= \mu_{n,k}(z')
+ \Sigma_{n,k}(z',z)U / \sqrt{\Sigma_{n,k}(z,z)+\lambda}.
\end{equation*}

Define $\mu_{n+1,\ell}=\mu_{n,\ell}$ and $\sigma_{n+1,\ell}=\sigma_{n,\ell}$ for $\ell \ne k$.

Now, we can create a recursive expression for a sample from the posterior over $y_K(x)$ given these $n+1$ evaluations in terms of the posterior means and variances and a sequence of standard normal random variables $W = (W_\ell : \ell=1,\ldots,K)$. Specifically, define recursively for each $\ell=1,\ldots, K$,
\begin{equation*}
\begin{split}
   {\bmz}_\ell(\bmx;\bmz) 
   &:= ({\bmy}_{J(\ell)}(\bmx;\bmz),\bmx_{I(\ell)})\\
    {y}_\ell(\bmx;\bmz) 
    &= \mu_{n+1,\ell}({\bmz}_\ell(\bmx;\bmz))
    + \sigma_{n+1,\ell}({\bmz}_\ell(\bmx;\bmz))W_\ell.
\end{split}
\end{equation*}
Making a slight abuse, we make the dependence of $y_K(x)$ on, $z$, $U$, and $W$ explicit through the notation $y_K(U, W, x, z)$.

Now, consider a specific node $k$. We drop the subscript $n,k$ for convenience. We define $h(U,W,x,z)=\frac{y_K(U,W,x,z)}{c(z)}$. In addition, define 
$$\alpha(z) = \mathbb{E}[\max_{x\in\mathbb{X}}\mathbb{E}[h(U,W,x,z)|U]]$$
and
$$\hat{\alpha}_{I,J(I)}(z)=\frac{1}{I}\sum_{i=1}^I\max_{x\in\mathbb{X}}\frac{1}{J(I)}\sum_{j=1}^{J(I)}h(U_i,W_j,x,z),$$
where $J(I)$ will be defined later.

Before proving the theorem~\ref{thm:mainthm}, we prove several auxiliary lemmas.

\begin{lemma}
\label{lemma:conth}
    For almost every $(u,w)$, $h(u,w,\cdot,\cdot)$ is continuous.
\end{lemma}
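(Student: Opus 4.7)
The plan is to peel off the division by $c(z)$, which is continuous and bounded below by a positive constant by assumption, so the claim reduces to showing that $(x,z)\mapsto y_K(u,w,x,z)$ is continuous for each fixed $(u,w)$. Recall that $y_K$ is built up recursively through
\[
y_\ell(x;z) = \mu_{n+1,\ell}(\hat{z}_\ell(x;z)) + \sigma_{n+1,\ell}(\hat{z}_\ell(x;z))\,W_\ell,
\]
where $\hat{z}_\ell(x;z) = (y_{\calJ(\ell)}(x;z),\, x_{\calI(\ell)})$. Thus it suffices to establish that (i) each $\mu_{n+1,\ell}$ and $\sigma_{n+1,\ell}$ is continuous in its argument, and (ii) each $\hat{z}_\ell(\cdot;\cdot)$ is continuous in $(x,z)$. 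Continuity of $y_\ell(x;z)$ then follows by composition, and taking $\ell = K$ proves the lemma.

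For (i), when $\ell \neq k$ the posterior mean and standard deviation coincide with $\mu_{n,\ell}$ and $\sigma_{n,\ell}$, which are continuous by standard GP regression starting from a continuous prior mean and kernel (continuity of the full covariance kernel follows from continuity of the prior variance on the diagonal together with the Cauchy--Schwarz inequality for reproducing kernels). For $\ell = k$ we use the closed-form updates for $\mu_{n+1,k}$ and $\sigma_{n+1,k}^2$ displayed earlier: the numerators involve $\mu_{n,k}$ and $\Sigma_{n,k}$ evaluated continuously in both arguments, the denominator $\Sigma_{n,k}(z,z) + \lambda$ is strictly positive (bounded below by $\lambda > 0$, treated as a stabilizing nugget), and the quantity under the square root in the variance update is nonnegative since it is the residual variance after one additional conditioning. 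Joint continuity in $(z',z)$ then follows because the building blocks are compositions of continuous maps.

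For (ii), we induct on the topological order of the DAG, which is valid because $\calJ(\ell) \subset \{1,\dots,\ell-1\}$. The base case $\ell = 1$ is immediate since $\hat{z}_1(x;z) = x_{\calI(1)}$ has no parent contributions. For the inductive step, if $y_1,\dots,y_{\ell-1}$ are continuous in $(x,z)$, then $\hat{z}_\ell(x;z)$ is continuous, and composing with the continuous $\mu_{n+1,\ell}$ and $\sigma_{n+1,\ell}$ yields continuity of $y_\ell(x;z)$. The main obstacle is the variance update $\sigma_{n+1,k}^2$: one must check there is no singularity where $z'$ equals or is close to previously observed points. The strictly positive denominator removes any division-by-zero concern, while non-negativity under the square root is guaranteed by the Schur-complement structure of a valid conditional variance. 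In fact, this argument yields continuity for every $(u,w)$, which is strictly stronger than the almost-everywhere statement the lemma requires.
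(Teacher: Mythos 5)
Your proof is correct and follows essentially the same route as the paper's: the paper simply asserts that continuity of the prior means and variances propagates to the posterior quantities (``a simple argument shows'') and then concludes that $h$ is a composition of continuous functions, which is exactly your structure. You merely fill in the details the paper leaves implicit---the closed-form posterior update with the $\lambda$-nugget keeping the denominator positive, the Schur-complement non-negativity under the square root, and the induction along the topological order of the DAG---and correctly observe that this gives continuity for \emph{every} $(u,w)$, not just almost every.
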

\begin{proof}[Proof of Lemma~\ref{lemma:conth}]
Since the prior means $\mu_{0,k'}(\cdot)$ variances $\sigma_{0,k'}(\cdot)$ are continuous, a simple argument shows that the posterior means $\mu_{n+1,k'}(\cdot)$ and variances $\sigma_{n+1,k'}(\cdot)$ are continuous too. Thus, $h(u,w,x,z)$ is a composition of continuous functions and so is also continuous.
\end{proof}

\begin{lemma}
\label{lemma:yK_bound}
There exist finite non-negative constants $c_0$, $c_1$ and $c_2$ not depending on $U$, $W$, $x$ or $z$ such that 
$|y_K(U,W,x,z)| \le c_0 + c_1 |W_K| + c_2 |U|$.
\end{lemma}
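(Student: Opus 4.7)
The main idea is that although $y_K$ is defined by a long recursion that involves all of $W_1,\ldots,W_{K-1}$, the final node value is expressed via just one more application of $\mu_{n+1,K}$ and $\sigma_{n+1,K}$ evaluated at \emph{some} input. Because these functions turn out to be bounded on their entire domain (with at most a linear-in-$U$ dependence at node $k$), the intermediate noise variables $W_\ell$ with $\ell<K$ never leak into the final bound. So the strategy is: (i) prove a uniform-in-argument bound on each $\mu_{n+1,\ell}$ and $\sigma_{n+1,\ell}$, (ii) apply this bound with $\ell=K$.

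\textbf{Step 1: Uniform boundedness of prior and posterior quantities.} Since by assumption each prior variance $\sigma_{0,\ell}^2(\cdot)$ is bounded, Cauchy–Schwarz for positive-semidefinite kernels gives $|\Sigma_{0,\ell}(a,b)|\le\sqrt{\Sigma_{0,\ell}(a,a)\,\Sigma_{0,\ell}(b,b)}\le M$ uniformly in $a,b$. Together with boundedness of $\mu_{0,\ell}$ and the standard GP regression formulas, this yields constants $A_\ell,B_\ell$ such that $|\mu_{n,\ell}(z')|\le A_\ell$ and $\sigma_{n,\ell}(z')\le B_\ell$ for every $z'\in\mathbb{R}^{d_\ell}$ (the data at each node is finite and fixed, and the Gram matrix plus $\lambda I$ is bounded below in the operator norm).

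\textbf{Step 2: Effect of the fantasy update at node $k$.} For $\ell\neq k$ we have $\mu_{n+1,\ell}=\mu_{n,\ell}$ and $\sigma_{n+1,\ell}=\sigma_{n,\ell}$, so the Step~1 bounds apply unchanged. For $\ell=k$, the explicit formulas give
\begin{equation*}
\mu_{n+1,k}(z')=\mu_{n,k}(z')+\frac{\Sigma_{n,k}(z',z)}{\sqrt{\Sigma_{n,k}(z,z)+\lambda}}\,U,
\qquad
\sigma_{n+1,k}^2(z')\le \Sigma_{n,k}(z',z').
\end{equation*}
The coefficient of $U$ is bounded in $z'$ (numerator bounded by Step~1, denominator bounded below by $\sqrt{\lambda}$), so there exist constants $A_k',B_k',C_k$ with $|\mu_{n+1,k}(z')|\le A_k'+C_k|U|$ and $\sigma_{n+1,k}(z')\le B_k'$ uniformly in $z'$.

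\textbf{Step 3: Conclude.} Regardless of what the recursion produces as the input $\hat z_K(x;z)$ to node $K$ (which is where the $W_\ell$ for $\ell<K$ would otherwise enter), we can apply the uniform bounds of Steps~1--2 at $\ell=K$:
\begin{equation*}
|y_K(U,W,x,z)|\;\le\;\bigl|\mu_{n+1,K}(\hat z_K(x;z))\bigr|+\sigma_{n+1,K}(\hat z_K(x;z))\,|W_K|\;\le\;c_0+c_2|U|+c_1|W_K|,
\end{equation*}
where $c_2=0$ if $K\neq k$ and $c_2=C_K$ if $K=k$, while $c_0$ absorbs the constant parts of the mean bound and $c_1$ is the uniform bound on $\sigma_{n+1,K}$.

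\textbf{Expected obstacle.} The only delicate point is uniform boundedness of the posterior mean on all of $\mathbb{R}^{d_\ell}$ (not just on the compact sets $\mathbb{X},\mathbb{Z}_{n,\ell}$), since the recursion can produce arbitrary real inputs. This is handled cleanly by the Cauchy–Schwarz argument on the kernel, but it does rely on the hypothesis that prior variances are bounded \emph{globally}, not merely on the compact design region; this is the hypothesis that lets the $W_\ell$ for $\ell<K$ be absorbed into $c_0$ rather than appearing in the bound.
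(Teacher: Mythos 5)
Your proof is correct and follows the same skeleton as the paper's: split on whether the evaluated node $k$ equals $K$, write $y_K$ as the (possibly fantasy-updated) posterior mean at node $K$ plus $\sigma_{n+1,K}(\cdot)W_K$, observe that the fantasy update contributes the term $\tilde{\sigma}_{n,K}(\cdot,z)\,U$ with a bounded coefficient only when $k=K$, and absorb everything else into constants so that $W_1,\ldots,W_{K-1}$ never appear. Where you differ is in how the uniform bounds are justified, and your version is actually the more careful one. The paper asserts that $\mu_{n,K}$, $\tilde{\sigma}_{n,K}$, and $\sigma_{n+1,K}$ are ``continuous and thus bounded over their compact domains,'' taking suprema over $\mathbb{Z}_{n,K}$ --- but the input $z_K(x;z)$ produced by the recursion contains the intermediate outputs $y_\ell$, which include unbounded Gaussian terms $\sigma_{n+1,\ell}(\cdot)W_\ell$, so it need not lie in any compact set; compactness alone does not close the argument. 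Your Step~1 supplies exactly what is needed: the hypothesis that prior variances are \emph{globally} bounded, combined with the kernel Cauchy--Schwarz inequality $|\Sigma_{0,\ell}(a,b)|\le\sqrt{\Sigma_{0,\ell}(a,a)\Sigma_{0,\ell}(b,b)}$ and the fixed, finite conditioning data, gives bounds on $\mu_{n,\ell}$ and $\sigma_{n,\ell}$ valid on all of $\mathbb{R}^{d_\ell}$, which is the form of uniformity the recursion actually demands. Your ``expected obstacle'' remark identifies precisely the gap the paper glosses over. Two small notes: your lower bound $\sqrt{\lambda}$ on the denominator of the $U$-coefficient presumes $\lambda>0$, but even for $\lambda=0$ Cauchy--Schwarz gives $|\Sigma_{n,k}(z',z)|/\sqrt{\Sigma_{n,k}(z,z)}\le\sqrt{\Sigma_{n,k}(z',z')}$, which is bounded by the same argument; and your inequality $\sigma_{n+1,k}^2(z')\le\Sigma_{n,k}(z',z')$ (variance shrinks under conditioning) is a clean substitute for the paper's direct boundedness claim on $\sigma_{n+1,K}$.
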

\begin{proof}[Proof of Lemma~\ref{lemma:yK_bound}]
Consider two cases. In the first case, suppose that the node we are measuring, $k$, precedes the final node, $k<K$. In this case, the posterior mean and variance of node $K$ do not change and are equal to their values under the prior.
Thus, $y_K(U,W,x,z)$ is equal to the prior mean function $\mu_{n,K}(\cdot)$ evaluated at a random input 
$z_K(x;z)$ plus the noise term $\sigma_{n,K}(z_K(x;z))W_K$.
This is bounded above by $\sup_{z'\in\mathbb{Z}_{n,K}}|\mu_{n,K}(z')| + \sup_{z'\in\mathbb{Z}_{n,K}}|\sigma_{n,K}(z')||W_K|$.
Note that $c_0 = \sup_{z'\in\mathbb{Z}_{n,K}}|\mu_{n,K}(z')|$ and $c_1 = \sup_{z'\in\mathbb{Z}_{n,K}}|\sigma_{n,K}(z')|$ are finite.

In the second case, we measure the final node, $k=K$.
Then, 
\begin{equation*}
\mu_{n+1,K}(z')
= \mu_{n,K}(z')
+ \tilde{\sigma}_{n,K}(z',z) U
\end{equation*}
where $\tilde{\sigma}_{n,K}(z',z) = \Sigma_{n,K}(z',z) / \sqrt{\Sigma_{n,K}(z,z)+\lambda}$.

Thus,
\begin{equation*}
y_K(U,W,x,z)  
= \mu_{n,K}(z_K(x;z))
+ \tilde{\sigma}_{n,K}(z_K(x;z),z) U
+ \sigma_{n+1,K}(z') W_K
\end{equation*}
and we have that 
\begin{equation}
|y_K(U,W,x,z)| \le 
|\mu_{n,K}(z_K(x;z))|
+ |\tilde{\sigma}_{n,K}(z_K(x;z),z)| |U|
+ |\sigma_{n+1,K}(z')| |W_K|.
\label{eq:lemma_rhs}
\end{equation}

Observe that $\tilde{\sigma}_{n,K}$, $\mu_{n,K}(z')$ and $\sigma_{n+1,K}$ are continuous and thus bounded over their compact domains.
Thus, the right-hand side of \eqref{eq:lemma_rhs} can be written in the form claimed in the statement of this lemma.
\end{proof}

\begin{lemma}
\label{lemma:dominated}
    The family $\{|h(u,w,x,z)|:x\in\mathbb{X},z\in\mathbb{Z}\}$ is dominated by an integrable function (with respect to the probability distribution over $U$ and $W$), i.e. $|h(u,w,x,z)|\leq a(u,w)$, where $\mathbb{E}[a(U,W)]<\infty$. Moreover, $b_r(w):=\max_{\{u:|u|\leq r\}}a(u,w)$ is also integrable for each $r<\infty$.
\end{lemma}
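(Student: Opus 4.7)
The plan is to chain together two facts that are already in hand: the explicit linear bound on $|y_K(U,W,x,z)|$ from Lemma~\ref{lemma:yK_bound}, and the standing assumption that the cost function is bounded below by a strictly positive constant $c_{\min}:=\inf_{z\in\mathbb{Z}_{n,k}}c_k(z)>0$. Since $h(u,w,x,z)=y_K(u,w,x,z)/c_k(z)$, dividing the bound from Lemma~\ref{lemma:yK_bound} through by $c_{\min}$ will produce a dominating function that does not depend on $x$ or $z$ and is affine in $|u|$ and $|w_K|$.

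Concretely, I would set
\[
a(u,w) \;:=\; \tfrac{1}{c_{\min}}\bigl(c_0 + c_1|w_K| + c_2|u|\bigr),
\]
with $c_0,c_1,c_2$ the constants supplied by Lemma~\ref{lemma:yK_bound}. Uniform domination $|h(u,w,x,z)|\le a(u,w)$ for all $x\in\mathbb{X},\,z\in\mathbb{Z}_{n,k}$ is then immediate. Integrability follows because $U$ and $W_K$ are (marginally) standard normal, so $\mathbb{E}|U|$ and $\mathbb{E}|W_K|$ are finite constants, giving $\mathbb{E}[a(U,W)]<\infty$. For the second claim, observe that $u\mapsto a(u,w)$ is affine and non-decreasing in $|u|$, so the maximum over $\{u:|u|\le r\}$ is attained at $|u|=r$ and equals
\[
b_r(w) \;=\; \tfrac{1}{c_{\min}}\bigl(c_0 + c_1|w_K| + c_2 r\bigr),
\]
which is integrable in $W$ by exactly the same standard-normal moment argument.

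There is no real obstacle here: the heavy lifting (handling the two cases $k<K$ and $k=K$, and producing a bound uniform over the compact domains) is already done inside Lemma~\ref{lemma:yK_bound}, and the cost lower bound is an explicit hypothesis of Theorem~\ref{thm:mainthm}. The only thing worth double-checking is that the constants $c_0,c_1,c_2$ from Lemma~\ref{lemma:yK_bound} can indeed be chosen independently of $(x,z)$ — but this was established there by taking suprema of continuous functions over compact sets, so nothing more is required.
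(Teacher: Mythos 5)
Your proof is correct and takes essentially the same approach as the paper: the paper also dominates $|h|$ via the affine bound $c_0 + c_1|W_K| + c_2|U|$ from Lemma~\ref{lemma:yK_bound} divided by the strictly positive cost lower bound, and obtains $b_r(w)$ by evaluating at $|u|=r$ with integrability following from standard normal moments. The only cosmetic difference is that the paper first proves the unit-cost case and then rescales by $\min_{z\in\mathbb{Z}} c(z)$, whereas you fold the factor $1/c_{\min}$ into $a(u,w)$ from the start.
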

\begin{proof}[Proof of Lemma~\ref{lemma:dominated}]
We will show the statement of the lemma for the case where $c(x)=1$. The general case follows from this because we can replace $a(u,w)$ by $a(u,w) / \min_{z\in\mathbb{Z}} c(z)$, where the denominator is strictly positive by our assumptions.

Setting $a(U,W) = c_0 + c_1 |W_K| + c_2 |U|$ using the constants $c_0$, $c_1$ and $c_2$ from Lemma~\ref{lemma:yK_bound} shows that $|h(u,w,x,z)|\leq a(u,w)$, where $\mathbb{E}[a(U,W)]<\infty$.

Moreover, letting $b_r(W):=\max_{\{u:|u|\leq r\}}a(u,w) = c_0 + c_1 |W_K| + c_2 |r|$, we have that 
$b_r(W)$ is integrable.

\end{proof}

\begin{lemma}
    \label{lemma:pointwise}
    For each $z\in\mathbb{Z}$, $\hat{\alpha}_{I,J(I)}(z)\rightarrow\alpha(z)$ almost surely.
\end{lemma}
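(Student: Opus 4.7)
The plan is, for fixed $z$, to decompose the error into an outer SLLN piece and an inner Monte Carlo piece, then bound the latter by combining uniform inner convergence on compact sets in $u$ with a tail truncation argument.

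Define $G_J(u) := \max_{x \in \mathbb{X}} \frac{1}{J} \sum_{j=1}^J h(u, W_j, x, z)$ and $G(u) := \max_{x \in \mathbb{X}} \mathbb{E}[h(u, W, x, z) \mid U = u]$, so that $\alpha(z) = \mathbb{E}[G(U)]$ and $\hat{\alpha}_{I, J(I)}(z) = \frac{1}{I} \sum_{i=1}^I G_{J(I)}(U_i)$. I would write
\[
\hat{\alpha}_{I, J(I)}(z) - \alpha(z) = \underbrace{\frac{1}{I} \sum_{i=1}^I \bigl[G_{J(I)}(U_i) - G(U_i)\bigr]}_{=: A_I} + \underbrace{\frac{1}{I} \sum_{i=1}^I G(U_i) - \mathbb{E}[G(U)]}_{=: B_I}.
\]
Lemma~\ref{lemma:dominated} gives $|G(U)| \leq \mathbb{E}[a(U,W) \mid U]$, so $G(U)$ is integrable, and the standard SLLN applied to the i.i.d.\ sequence $\{G(U_i)\}$ yields $B_I \to 0$ almost surely.

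The substantive step is controlling $A_I$. The plan is to invoke a Rubinstein-Shapiro-type uniform SLLN (their Lemma A1 / Theorem A1) on the family $\{h(u, \cdot, x, z) : (u, x) \in \{|u| \leq r\} \times \mathbb{X}\}$. Joint continuity of $h$ in $(u, x)$ from Lemma~\ref{lemma:conth}, compactness of $\{|u| \leq r\} \times \mathbb{X}$, and the integrable envelope $b_r$ from Lemma~\ref{lemma:dominated} should give
\[
\sup_{|u| \leq r,\, x \in \mathbb{X}} \left| \frac{1}{J} \sum_{j=1}^J h(u, W_j, x, z) - \mathbb{E}[h(u, W, x, z) \mid U = u] \right| \to 0 \quad \text{a.s. as } J \to \infty,
\]
and taking suprema over $x$ then gives $\sup_{|u| \leq r} |G_{J(I)}(u) - G(u)| \to 0$ a.s. as $I \to \infty$ (since $J(I) \to \infty$).

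For the tail, fix $\varepsilon > 0$ and split $A_I$ according to $\{|U_i| \leq r\}$ and $\{|U_i| > r\}$. The first part is bounded by $\sup_{|u|\leq r}|G_{J(I)}(u) - G(u)|$ and goes to $0$ a.s. For the second, Lemma~\ref{lemma:yK_bound} combined with the positive lower bound on $c_k$ yields $|G_J(u)| + |G(u)| \leq C_0 + C_1 |u| + C_2 \cdot \frac{1}{J}\sum_j |W_{K,j}| + C_2 \mathbb{E}|W_K|$; since $\frac{1}{J(I)}\sum_j |W_{K,j}| \to \mathbb{E}|W_K|$ a.s., this is eventually dominated by $C_0' + C_1 |u|$. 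The outer SLLN applied to the integrable i.i.d.\ sequence $(C_0' + C_1 |U_i|)\mathbf{1}\{|U_i| > r\}$ then gives
\[
\limsup_{I\to\infty} \left| \frac{1}{I} \sum_{i:|U_i|>r} [G_{J(I)}(U_i) - G(U_i)] \right| \leq \mathbb{E}\bigl[(C_0' + C_1 |U|)\mathbf{1}\{|U| > r\}\bigr] \quad \text{a.s.,}
\]
which can be made less than $\varepsilon$ by choosing $r$ large since $U$ is integrable. Sending $\varepsilon \downarrow 0$ then gives $A_I \to 0$ a.s., completing the proof.

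The main obstacle will be the uniform inner SLLN step: the variable $u$ appears inside the random function, so one needs joint-in-$(u, x)$ uniform convergence on $\{|u|\leq r\}\times\mathbb{X}$, not just $x$-uniform convergence for each fixed $u$. This is exactly what the envelope $b_r$ in Lemma~\ref{lemma:dominated} was designed to support, and verifying the hypotheses of the Rubinstein-Shapiro result in this joint form (equicontinuity via continuity plus compactness, integrable envelope) is the delicate part.
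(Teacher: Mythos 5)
Your proposal is correct and follows essentially the same route as the paper's proof: both hinge on the Rubinstein--Shapiro uniform SLLN (their Lemma A1) applied jointly in $(x,u)$ over the compact set $\mathbb{X}\times\{|u|\leq r\}$ with the envelope $b_r$ from Lemma~\ref{lemma:dominated}, combined with the linear-in-$|U|$ bound from Lemma~\ref{lemma:yK_bound} and Gaussian tails to control the $\{|U_i|>r\}$ contribution. The only difference is bookkeeping --- you split the error into an outer-SLLN term $B_I$ plus an inner term $A_I$ partitioned by the indicator $\mathbf{1}\{|U_i|>r\}$, whereas the paper uses a three-term triangle inequality with the clipped variable $U^{(r)}$ --- and both yield the same $3\epsilon$-style conclusion.
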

\begin{proof}[Proof of Lemma~\ref{lemma:pointwise}]
    Fix $z\in\mathbb{Z}$. Let $$\hat{\beta}_I(U;z)=\max_{x\in\mathbb{X}}\frac{1}{J(I)}\sum_{j=1}^{J(I)}h(U,W_j,x,z)$$
    and
    $$\beta(U;z)=\max_{x\in\mathbb{X}}\mathbb{E}[h(U,W,x,z)|U].$$
    Observe that the mapping $(x,u)\rightarrow h(u,w,x,z)$ satisfies the assumptions of Lemma A1 in \citetAP{rubinstein1993discrete}, when $u$ is restricted to the set $\{u:|u|\leq r\}$, i.e. 
    \begin{itemize}
        \item $\mathbb{X}\times \{u:|u|\leq r\}$ is compact.
        \item The mapping $(x, u)\rightarrow h(u,w,x,z)$ is continuous for almost every $w$.
        \item $h(u,w,x,z)\leq b(w)=\max_{u:|u|\leq r}a(u,w)$ for all $x\in\mathbb{X}$ and $u\in[-r,r]$, and $\mathbb{E}|b(W)|<\infty$ by Lemma~\ref{lemma:dominated}.
    \end{itemize} Thus we have by the Lemma A1, w.p. 1,
    $\frac{1}{J(I)}\sum_{j=1}^{J(I)}h(U,W_j,x,z)$ converges to $\mathbb{E}[h(U,W,x,z)|U]$ uniformly in $x,U$ over $\mathbb{X}\times [-r,r]$.

    Choose any $\epsilon>0$ and let $I_1$ be large enough on the given sample path that 
    $$\left|\frac{1}{J(I)}\sum_{j=1}^{J(I)}h(U,W_j,x,z)-\mathbb{E}[h(U,W,x,z)|U]\right|<\epsilon,$$
    for all $I\geq I_1$, $U\in[-r,r]$ and $x\in\mathbb{X}$. Then,
    $$|\hat{\beta}_{I}(U;z)-\beta(U;z)|<\epsilon,$$
    for all $I\geq I_1$ and $U\in[-r,r]$.\\
    Let 
        \[ U^{(r)} = \begin{cases} 
          U & \text{if}\hspace{0.5cm} U\in[-r,r]\\
          r & \text{if}\hspace{0.5cm} U>r \\
          -r & \text{if}\hspace{0.5cm} U<-r.
       \end{cases}
    \]
    and similarly for $U_i^{(r)}$.\\
    Let $\hat{\alpha}^{(r)}_I(z)=\frac{1}{I}\sum_{i=1}^I\hat{\beta}_I(U_i^{(r)};z)$ and $\alpha^{(r)}(z)=\mathbb{E}[\beta(U^{(r)};z)]$. We have
    \begin{align}
        \left|\hat{\alpha}_I(z)-\alpha(z)\right|&\leq \left|\hat{\alpha}_I(z)-\hat{\alpha}^{(r)}_I(z)\right|+\left|\hat{\alpha}^{(r)}_I(z)-\alpha^{(r)}(z)\right|+\left|\alpha^{(r)}(z)-\alpha(z)\right|
    \end{align}
    For $I\geq I_1$, the second term is bounded above by $\epsilon$. Let's focus on the first and third terms. We have that the third term is bounded above by
    \begin{align}
        \left|\alpha^{(r)}(z)-\alpha(z)\right|&\leq \mathbb{E}\left[\mathbf{1}_{\{|U|>r\}}|\beta(U)-\beta(U^{(r)})|\right]\notag\\
        &\leq\mathbb{E}\left[\mathbf{1}_{\{|U|>r\}}(|\beta(U)|+|\beta(U^{(r)})|)\right]\notag\\
        &\leq\mathbb{E}\left[\mathbf{1}_{\{|U|>r\}}(a(U,W)+a(U^{(r)},W))\right]\label{eqn:proof_new_0},
    \end{align}
    where the last inequality comes from 
    \begin{align}
        |\beta(U)|&=|\max_{x\in\mathbb{X}}\mathbb{E}[h(U,W,x,z)|U]|\notag\\
        &\leq \max_{x\in\mathbb{X}}\mathbb{E}[a(U,W)|U]=\mathbb{E}[a(U,W)|U].
    \end{align}
    Consider the right hand side of \eqref{eqn:proof_new_0}, we have that
    \begin{align}
        \mathbb{E}[\mathbf{1}_{\{|U|>r\}}(a(U,W)&+a(U^{(r)},W))] \notag\\
        &=\mathbb{E}\left[\mathbf{1}_{\{|U|>r\}}(2c_0+2c_1|W_K|+c_2|U|+|U^{(r)}|)\right]\notag\\
        &=2c_0\mathbb{P}(|U|>r)+2c_1\mathbb{P}(|U|>r)\mathbb{E}[|W_K|]+c_2\mathbb{E}[\mathbf{1}_{\{|U|>r\}}|U|]+\mathbb{E}[\mathbf{1}_{\{|U|>r\}}|U^{(r)}|]\notag\\
        &=2c_0\mathbb{P}(|U|>r)+2c_1\mathbb{P}(|U|>r)\mathbb{E}[|W_K|]+c_2\mathbb{E}[\mathbf{1}_{\{|U|>r\}}|U|]+\mathbb{E}[\mathbf{1}_{\{|U|>r\}}|r|]\notag\\
        &=2c_0\mathbb{P}(|U|>r)+2c_1\mathbb{P}(|U|>r)\mathbb{E}[|W_K|]+c_2\mathbb{E}[\mathbf{1}_{\{|U|>r\}}|U|]+r\mathbb{P}(|U|>r).\label{eqn:rhs_r}
    \end{align}
    Since $U$ is a standard normal random variable, $\mathbb{P}(|U|>r)$ goes to 0 as $r\rightarrow\infty$ exponentially fast. Moreover,  letting $Y=|U|$, we have that 
    \begin{align}
        \mathbb{E}[\mathbf{1}_{\{|U|>r\}}|U|]&=\mathbb{E}[\mathbf{1}_{\{Y>r\}}Y]\notag\\
        &=\int_{r}^\infty \frac{y}{\sqrt{2\pi}}\left(2\exp\left(-\frac{y^2}{2}\right)\right)dy\notag\\
                &=\sqrt{\frac{2}{\pi}}\exp(-\frac{r^2}{2})\notag,
    \end{align}
    which also converges to 0 as $r\rightarrow\infty$.
    Therefore, the right-hand side of \eqref{eqn:proof_new_0} converges to 0 as $r\rightarrow\infty$ and thus can be bounded above by $\epsilon$ for a large enough $r$.

    Consider the first term. It is bounded above by
    \begin{align}
 \left|\hat{\alpha}_I(z)-\hat{\alpha}^{(r)}_I(z)\right|&\leq\left|\frac{1}{I}\sum_{i=1}^I\mathbf{1}_{\{|U_i|>r\}}\left(\hat{\beta}_I(U_i)-\hat{\beta}_I(U_i^{(r)})\right)\right|\notag\\
                &\leq\frac{1}{I}\sum_{i=1}^I\mathbf{1}_{\{|U_i|>r\}}\left(|\hat{\beta}_I(U_i)|+|\hat{\beta}_I(U_i^{(r)})|\right)\notag\\
                &=\frac{1}{I}\sum_{i=1}^I\mathbf{1}_{\{|U_i|>r\}}\frac{1}{J(I)}\sum_{j=1}^{J(I)}a(U_i,W_j)+a(U^{(r)}_i,W_j)\label{eqn:proof_new_1}
    \end{align}
    By the strong law of large number the term on the right hand side of \eqref{eqn:proof_new_1} converges almost surely to the term on the right hand side of \eqref{eqn:proof_new_0}. Since we have chosen $r$ large enough such that the right hand side of \eqref{eqn:proof_new_0} is bounded by $\epsilon$, we can choose $I$ large enough such that the right hand side of 
    \eqref{eqn:proof_new_1} is also bounded by $\epsilon.$ 

    Therefore, for a given $\epsilon>0$, w.p. 1,
    $$|\hat{\alpha}_{I,J(I)}(z)-\alpha(z)|<3\epsilon,$$
    for large enough $I$. This implies $\lim_{I\rightarrow\infty}\hat{\alpha}_{I,J(I)}(z)=\alpha(z)$ almost surely as desired.
\end{proof}
\begin{lemma}
\label{lemma:unicon}
    $\alpha(z)$ is continuous in $z$ and $\hat{\alpha}_{I,J(I)}(z)$ converges uniformly to $\alpha(z)$ on $\mathbb{Z}$ as $I\rightarrow\infty$ and $\lim_{I\rightarrow\infty}J(I)=\infty.$
\end{lemma}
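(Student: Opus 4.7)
The plan is to first prove continuity of $\alpha$ and then derive uniform convergence by decomposing $\hat{\alpha}_{I,J(I)} - \alpha$ into inner and outer SAA components, each controlled via a truncation argument in the unbounded noise $U$ analogous to the one in Lemma~\ref{lemma:pointwise}. For continuity, define $g(u,x,z) := \mathbb{E}_W[h(u,W,x,z)]$. Lemma~\ref{lemma:conth} gives continuity of $h(u,w,\cdot,\cdot)$ a.s., and Lemma~\ref{lemma:dominated} supplies the integrable envelope $a$, so dominated convergence makes $g$ jointly continuous in $(u,x,z)$. Joint continuity plus compactness of $\mathbb{X}$ gives continuity of $\beta(u,z) := \max_{x\in\mathbb{X}} g(u,x,z)$, and $|\beta(u,z)| \le \mathbb{E}_W[a(u,W)]$ is integrable in $U$ by Lemma~\ref{lemma:yK_bound}, so another application of dominated convergence yields continuity of $\alpha(z) = \mathbb{E}[\beta(U,z)]$ on $\mathbb{Z}$, hence uniform continuity by compactness of $\mathbb{Z}$.

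For uniform convergence, I would write
\[
\hat{\alpha}_{I,J(I)}(z) - \alpha(z) = A_I(z) + B_I(z),
\]
where $A_I(z) := \frac{1}{I}\sum_{i=1}^I \bigl[\hat{\beta}_I(U_i;z) - \beta(U_i;z)\bigr]$ is the inner error and $B_I(z) := \frac{1}{I}\sum_{i=1}^I \beta(U_i;z) - \alpha(z)$ is the outer error. The outer error is handled directly: since $\beta(\cdot,z)$ is continuous in $z$ on the compact set $\mathbb{Z}$ and uniformly dominated in $z$ by the integrable function $u \mapsto \mathbb{E}_W[a(u,W)] \le c_0 + c_1\mathbb{E}|W_K| + c_2|u|$, Lemma~A1 of \citetAP{rubinstein1993discrete} applied to the parametric family $\{\beta(\cdot,z)\}_{z\in\mathbb{Z}}$ yields $\sup_{z\in\mathbb{Z}} |B_I(z)| \to 0$ almost surely.

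For the inner error, fix $r > 0$. On the compact set $[-r,r]\times\mathbb{X}\times\mathbb{Z}$, the map $(u,x,z)\mapsto h(u,W,x,z)$ is continuous a.s.\ and dominated by the integrable $b_r(W)$ of Lemma~\ref{lemma:dominated}; a further application of Lemma~A1 of \citetAP{rubinstein1993discrete} gives
\[
\sup_{|u|\le r,\, x\in\mathbb{X},\, z\in\mathbb{Z}} \Bigl|\tfrac{1}{J(I)}\sum_{j=1}^{J(I)} h(u,W_j,x,z) - g(u,x,z)\Bigr| \to 0
\]
almost surely as $J(I)\to\infty$. Since $|\max_x f - \max_x g| \le \max_x |f - g|$, the supremum over $x$ preserves this, so $\sup_{|u|\le r,\, z\in\mathbb{Z}} |\hat{\beta}_I(u;z) - \beta(u;z)| \to 0$ a.s., and the contribution to $A_I$ from $\{|U_i|\le r\}$ vanishes uniformly in $z$. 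For the tail, use that $|\hat{\beta}_I(u;z)| \le \frac{1}{J(I)}\sum_j a(u,W_j)$ and $|\beta(u;z)| \le \mathbb{E}_W[a(u,W)]$ with an envelope that does not depend on $z$; then the strong law, exactly as in the proof of Lemma~\ref{lemma:pointwise}, shows the tail contribution converges almost surely to $2\mathbb{E}[\mathbf{1}_{\{|U|>r\}} a(U,W)]$, which can be made arbitrarily small by choosing $r$ large. Combining yields $\sup_{z\in\mathbb{Z}} |A_I(z)| \to 0$ a.s., and together with the bound on $B_I$ this completes the proof.

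The main obstacle is arranging bounds that are uniform in $z$ despite the unbounded Gaussian support of $U$: the pointwise truncation in Lemma~\ref{lemma:pointwise} must be upgraded to one whose error bounds do not depend on $z$. This is exactly what the $z$-free envelope $a(u,w)$ of Lemma~\ref{lemma:dominated} makes possible, and the assumption $J(I)\to\infty$ together with Rubinstein-Shapiro's Lemma~A1 is what is needed to erase the inner Monte Carlo error uniformly on the truncated compact slab before the tail of $U$ is handled.
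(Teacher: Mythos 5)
Your proof is correct, and its uniform-convergence half takes a genuinely different route from the paper's. The continuity argument for $\alpha$ is essentially the same as the paper's (dominated convergence, joint continuity of $g$, compactness, and the standard two-sided maximizer comparison). But for uniform convergence, the paper proceeds via a stochastic-equicontinuity argument: it bounds the oscillation of $\hat{\alpha}_{I,J(I)}$ on shrinking neighborhoods through the modulus $b_k(u,w)=\sup\{|h(u,w,x,z)-h(u,w,x,y)|:y\in\mathbb{N}_k,x\in\mathbb{X}\}$, shows $\mathbb{E}[b_k(U,W)]\to 0$ by dominated convergence, extracts a finite subcover of $\mathbb{Z}$, and then upgrades the \emph{pointwise} convergence of Lemma~\ref{lemma:pointwise} at the cover centers to uniform convergence via a triangle inequality --- an Arzel\`a--Ascoli-flavored argument that uses Lemma~\ref{lemma:pointwise} as an essential input. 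You instead split $\hat{\alpha}_{I,J(I)}-\alpha$ into inner and outer SAA errors and invoke Lemma A1 of \citetAP{rubinstein1993discrete} \emph{twice as a uniform law of large numbers}: once with sample $W$ and parameter $(u,x,z)$ on the truncated slab $[-r,r]\times\mathbb{X}\times\mathbb{Z}$ (the paper only applies it with parameter $(x,u)$ at fixed $z$, inside Lemma~\ref{lemma:pointwise}), and once with sample $U$ and parameter $z$ for the outer average, with the $U$-tail handled by the same truncation computation as in Lemma~\ref{lemma:pointwise}. Your route is more modular and actually subsumes Lemma~\ref{lemma:pointwise} --- you obtain uniform convergence directly rather than bootstrapping from pointwise convergence --- at the cost of leaning harder on the ULLN form of Lemma A1, including the joint continuity of $h$ in $(u,x,z)$ and the $z$-free envelope to make the second application legitimate; the paper's covering argument, by contrast, re-derives the needed equicontinuity by hand and so is more self-contained. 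One point to make explicit if you write this up: the a.s.\ statements hold for each fixed $r$, so you should intersect over a countable family $r\in\mathbb{N}$ before choosing $r$ large --- and the tail SLLN for the double average $\frac{1}{I}\sum_i \mathbf{1}_{\{|U_i|>r\}}\frac{1}{J(I)}\sum_j a(U_i,W_j)$ deserves the same justification the paper tacitly relies on, namely that $a(u,w)=c_0+c_1|w_K|+c_2|u|$ is additively separable so the expression factors into products of ordinary i.i.d.\ averages; this is no worse than the paper's own treatment of the corresponding term.
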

\begin{proof}[Proof of Lemma~\ref{lemma:unicon}]
    Observe that
    \begin{align}
        \max_{x\in\mathbb{X}}\mathbb{E}[h(U,W,x,z)|U]&\leq\max_{x\in\mathbb{X}}\mathbb{E}[a(U,W)|U]=\mathbb{E}[a(U,W)|U],\label{eqn:proof1}
    \end{align}
    where the inequality follows from Lemma~\ref{lemma:dominated}. The term of the right hand side of \eqref{eqn:proof1} is an integrable function of $U$ since
    \begin{align}
        \mathbb{E}[|\mathbb{E}[a(U,W)|U|]=\mathbb{E}[\mathbb{E}[a(U,W)|U]]=\mathbb{E}[a(U,W)]<\infty.\label{eqn:proof2}
    \end{align}
    Consider any $z\in\mathbb{Z}$ and a sequence $\{z_k:k\geq 1\}\subseteq \mathbb{Z}$ converging to $z$. By the Lebesgue dominated convergence theorem and \eqref{eqn:proof2},
    \begin{align}
        \lim_{k\rightarrow\infty}\mathbb{E}[\max_{x\in\mathbb{X}}\mathbb{E}[h(U,W,x,z_k)|U]]&=\mathbb{E}[\lim_{k\rightarrow\infty}\max_{x\in\mathbb{X}}\mathbb{E}[h(U,W,x,z_k)|U]]\label{eqn:proof_new}
    \end{align}
    Fixing $U$, let $g(x,z)=\mathbb{E}[h(U,W,x,z)|U]$. This function is continuous jointly in $x,z$ because for an arbitrary sequence $\{(x_k,z_k):k\geq 1\}$ converging to $(x,z)$, we have
    
    $$\lim_{k\rightarrow\infty}\mathbb{E}[h(U,W,x_k,z_k)]=\mathbb{E}[\lim_{k\rightarrow\infty}h(U,W,x_k,z_k)]=\mathbb{E}[h(U,W,x,z)],$$
    where we have used Lemma~\ref{lemma:dominated} and the Lebesgue dominated convergence theorem in the first equality and used continuity (Lemma~\ref{lemma:conth}) in the second equality. 

    Since $\mathbb{X}\times\mathbb{Z}$ is compact, $g$ is uniformly continuous over  $\mathbb{X}\times\mathbb{Z}$. Thus, there exists a large enough $K$ such that 
    $$|g(x,z_k)-g(x,z)|<\epsilon,$$
    for all $k\geq K$ and all $x\in\mathbb{X}.$ Let $x_k^*\in\arg\max_{x\in\mathbb{X}} g(x,z_k)$. For all such $k$, we have
    \begin{equation}
        \max_{x\in\mathbb{X}}g(x,z_k)-\max_{x\in\mathbb{X}}g(x,z)\leq g(x_k^*,z_k)-g(x_k^*,z)<\epsilon.\label{eqn:proof0.1}
    \end{equation}
    Similarly, let $x^*\in\arg\max_{x\in\mathbb{X}}g(x,z)$. For all such $k$, we have
    \begin{equation}
        -\epsilon<g(x^*,z_k)-g(x^*,z)\leq\max_{x\in\mathbb{X}}g(x,z_k)-\max_{x\in\mathbb{X}}g(x,z)\label{eqn:proof0.2}
    \end{equation}
    Equations~\eqref{eqn:proof0.1} and~\eqref{eqn:proof0.2} imply
    $$\left|\max_{x\in\mathbb{X}}g(x,z_k)-\max_{x\in\mathbb{X}}g(x,z)\right|<\epsilon.$$
    This shows that $\lim_{k\rightarrow\infty}\max_{x\in\mathbb{X}}g(x,z_k)=\max_{x\in\mathbb{X}}g(x,z).$ Thus, \eqref{eqn:proof_new} becomes
    \begin{align}
        \lim_{k\rightarrow\infty}\mathbb{E}[\max_{x\in\mathbb{X}}\mathbb{E}[h(U,W,x,z_k)|U]]&=\mathbb{E}[\lim_{k\rightarrow\infty}\max_{x\in\mathbb{X}}\mathbb{E}[h(U,W,x,z_k)|U]]\notag\\
        &=\mathbb{E}[\max_{x\in\mathbb{X}}\mathbb{E}[h(U,W,x,z)|U]]=\alpha(z).\label{eqn:proof_new2}
    \end{align}
    Equation~\eqref{eqn:proof_new2} implies $\lim_{k\rightarrow\infty}\alpha(z_k)=\alpha(z)$, showing that $\alpha(\cdot)$ is continuous on $\mathbb{Z}$.

    Now consider a sequence $\mathbb{N}_k$ of neighborhoods of a generic point $z\in\mathbb{Z}$ shrinking to $\{z\}$. Let 
    \begin{equation}
        b_k(u,w) = \sup\{|h(u,w,x,z)-h(u,w,x,y)|:y\in \mathbb{N}_k, x\in\mathbb{X}\}.\notag
    \end{equation}
    Recall that $h(u,w,\cdot, \cdot)$ is continuous for almost every $(u,w)$. Choose $u,w$ such that $h(u,w,\cdot,\cdot)$ is continuous on $\mathbb{X}\times\mathbb{Z}$. Since $\mathbb{X}\times \mathbb{Z}$ is compact, $h(u,w,\cdot,\cdot)$ is uniformly continuous on $\mathbb{X}\times\mathbb{Z}$.
    Therefore, given $\epsilon>0$, there exists $\delta>0$ such that $|h(u,w,x',z')-h(u,w,x'',z'')|<\epsilon$ when $||(x',z')-(x'',z'')||<\delta$. Thus, for a large enough $k$, we have $|h(u,w,x,z)-h(u,w,x,y)|<\epsilon$ for all $y\in \mathbb{N}_k$. This implies $\lim_{k\rightarrow\infty}b_k(u,w)=0$ for almost every $(u,w)$.

    Define 
    \begin{equation}
        c_k(U_i,W_{1:J})=\sup\left\{\left|\max_{x\in\mathbb{X}}\frac{1}{J}\sum_{j=1}^Jh(U_i,W_j,x,z)-\max_{x\in\mathbb{X}}\frac{1}{J}\sum_{j=1}^Jh(U_i,W_j,x,z')\right|:z'\in \mathbb{N}_k\right\}.\notag
    \end{equation}
    Let $x^*(z')\in\arg\max_{x\in\mathbb{X}}\frac{1}{J}\sum_{j=1}^Jh(U_i,W_j,x,z')$. For $z'\in\mathbb{N}_k$,
    \begin{align}
        \max_{x\in\mathbb{X}}\frac{1}{J}\sum_{j=1}^Jh(U_i,W_j,x,z)&-\max_{x\in\mathbb{X}}\frac{1}{J}\sum_{j=1}^Jh(U_i,W_j,x,z')\notag\\
        &\leq\frac{1}{J}\sum_{j=1}^Jh(U_i,W_j,x^*(z),z)-\frac{1}{J}\sum_{j=1}^Jh(U_i,W_j,x^*(z),z')\notag\\
        &=\frac{1}{J}\sum_{j=1}^J[h(U_i,W_j,x^*(z),z)-h(U_i,W_j,x^*(z),z')]\notag\\
        &\leq\frac{1}{J}\sum_{j=1}^J|h(U_i,W_j,x^*(z),z)-h(U_i,W_j,x^*(z),z')|\notag\\
        &\leq\frac{1}{J}\sum_{j=1}^Jb_k(U_i,W_j).\label{eqn:new_ineqbk1}
    \end{align}
    Similarly,
        \begin{align}
        \max_{x\in\mathbb{X}}\frac{1}{J}\sum_{j=1}^Jh(U_i,W_j,x,z)&-\max_{x\in\mathbb{X}}\frac{1}{J}\sum_{j=1}^Jh(U_i,W_j,x,z')\notag\\
        &\geq\frac{1}{J}\sum_{j=1}^Jh(U_i,W_j,x^*(z'),z)-\frac{1}{J}\sum_{j=1}^Jh(U_i,W_j,x^*(z'),z')\notag\\
        &\geq-\frac{1}{J}\sum_{j=1}^J|h(U_i,W_j,x^*(z'),z)-h(U_i,W_j,x^*(z'),z')|\notag\\
        &\geq-\frac{1}{J}\sum_{j=1}^Jb_k(U_i,W_j).\label{eqn:new_ineqbk2}
    \end{align}
    Equations~\eqref{eqn:new_ineqbk1} and~\eqref{eqn:new_ineqbk2} imply 
    $$\left|\max_{x\in\mathbb{X}}\frac{1}{J}\sum_{j=1}^Jh(U_i,W_j,x,z)-\max_{x\in\mathbb{X}}\frac{1}{J}\sum_{j=1}^Jh(U_i,W_j,x,z')\right|\leq \frac{1}{J}\sum_{j=1}^Jb_k(U_i,W_j).$$
    This implies 
    $$ c_k(U_i,W_{1:J})\leq \frac{1}{J}\sum_{j=1}^J b_k(U_i,W_j).$$
    Thus for all $z'\in\mathbb{N}_k$,
    \begin{align}
        |\hat{\alpha}_{I,J(I)}(z')-\hat{\alpha}_{I,J(I)}(z)|&\leq\frac{1}{I}\sum_{i=1}^Ic_k(U_i,W_{1:J})\notag\\
        &\leq \frac{1}{I}\sum_{i=1}^I\frac{1}{J}\sum_{j=1}^Jb_k(U_i,W_j).\label{eqn:proof4.0}
    \end{align}
    Fix $J$ to be a function of $I$, i.e. $J=J(I)$ where $\lim_{I\rightarrow\infty}J(I)=\infty$. By the strong law of large number, the right hand side of Equation~\eqref{eqn:proof4.0} converges to $\mathbb{E}[b_k(U,W)]$ as $I\rightarrow\infty$ almost surely.

    Lemma~\ref{lemma:dominated} implies $b_k(U,W)\leq 2a(U,W)$. Thus, by the Lebesgue dominated convergence theorem, $\lim_{k\rightarrow\infty}\mathbb{E}[b_k(U,W)]=\mathbb{E}[\lim_{k\rightarrow\infty}b_k(U,W)]=0$.

    For any $\epsilon>0$, there is a large enough $K$ such that $\mathbb{E}[b_K(U,W)]<\epsilon$. Moreover, w.p.1, there is a large enough $I^*$ such that 
    $$\sup_{z'\in\mathbb{N}_K}|\hat{\alpha}_{I,J(I)}(z')-\hat{\alpha}_{I,J(I)}(z)|<\epsilon\hspace{0.5cm}\forall I\geq I^*.$$
    Moreover, since $\alpha$ is continuous, it is possible to choose $K$ large enough that 
    $$\sup_{z'\in\mathbb{N}_K}|\alpha(z')-\alpha(z)|<\epsilon.$$
    We have shown that, for any point $z\in\mathbb{Z}$ and any $\epsilon>0$, there is a neighborhood $\mathbb{N}(z,\epsilon)$ of $z$ such that w.p. 1 there is a large enough $I^*$ such that
    $$\sup_{z'\in\mathbb{N}(z,\epsilon)}|\hat{\alpha}_{I,J(I)}(z')-\hat{\alpha}_{I,J(I)}(z)|<\epsilon\hspace{0.5cm}\forall I\geq I^*$$
    and that 
    $$\sup_{z'\in \mathbb{N}(z,\epsilon)}|\alpha(z')-\alpha(z)|<\epsilon.$$

    Let $\{\mathbb{N}(z,\epsilon):z\in\mathbb{Z}\}$ be an open cover of $\mathbb{Z}$. Since $\mathbb{Z}$ is compact, we can choose a finite subcover. This gives a collection of points $z_1,\hdots,z_J$ with neighborhoods $\mathbb{N}(z_j,\epsilon)$ that cover $\mathbb{Z}$ such that w.p. 1, for sufficiently large $I$,
    $$\sup\{|\hat{\alpha}_{I,J(I)}(z')-\hat{\alpha}_{I,J(I)}(z_j)|:z'\in \mathbb{N}(z_j,\epsilon)\}<\epsilon$$
    and
    $$\sup\{|\alpha(z')-\alpha(z_j)|:z'\in\mathbb{N}(z_j,\epsilon)\}<\epsilon.$$
    By Lemma~\ref{lemma:pointwise}, we have
    $$|\hat{\alpha}_{I,J(I)}(z_j)-\alpha(z_j)|<\epsilon.$$
    Thus, given $\epsilon>0$, w.p. for $I$ large enough, 
    \begin{align}
        |\hat{\alpha}_{I,J(I)}(z)-\alpha(z)|\leq |\hat{\alpha}_{I,J(I)}(z)-\hat{\alpha}_{I,J(I)}(z_j)|+|\hat{\alpha}_{I,J(I)}(z_j)-\alpha(z_j)|+|\alpha(z_j)-\alpha(z)|<3\epsilon,\notag
    \end{align}
    where $z\in\mathbb{N}(z_j,\epsilon).$ This implies that $\hat{\alpha}_{I,J(I)}(z)\rightarrow\alpha(z)$ uniformly on $\mathbb{Z}$ as $I\rightarrow\infty$, as desired.
\end{proof}

Using the above lemmas, we are now in position to prove Theorem~\ref{thm:mainthm}.
\begin{proof}[Proof of Theorem~\ref{thm:mainthm}]
    From Lemma~\ref{lemma:unicon}, we know that for any $\epsilon>0$ and a sufficiently large $I$ and all $z\in\mathbb{Z}$,
    \begin{equation}
        |\hat{\alpha}_{I,J(I)}(z)-\alpha(z)|<\epsilon \hspace{0.2cm}\text{w.p. 1}.\label{eqn:continuousalpha}
    \end{equation}
    Let $\hat{z}_{I,J(I)}\in\arg\max_{z\in\mathbb{Z}}\hat{\alpha}_{I,J(I)}(z)$ and $z^*\in\arg\max_{z\in\mathbb{Z}}\alpha(z)$. It follows that 
    $$|\hat{\alpha}_{I,J(I)}(\hat{z}_{I,J(I)})-\alpha(\hat{z}_{I,J(I)})|<\epsilon,$$
    which implies 
    \begin{equation}
        \hat{\alpha}_{I,J(I)}(\hat{z}_{I,J(I)})<\alpha(\hat{z}_{I,J(I)})+\epsilon<\alpha(z^*)+\epsilon.\label{eqn:proof4}
    \end{equation}
    Similarly, we have
    $$|\hat{\alpha}_{I,J(I)}(z^*)-\alpha(z^*)|<\epsilon,$$
    which implies 
    \begin{equation}
        -\epsilon+\alpha(z^*)<\hat{\alpha}_{I,J(I)}(z^*)<\hat{\alpha}_{I,J(I)}(\hat{z}_{I,J(I)})\label{eqn:proof5}
    \end{equation}
    Equations~\eqref{eqn:proof4} and~\eqref{eqn:proof5} yield $|\hat{\varphi}_{I,J(I)}-\varphi^*|<\epsilon$. Since $\epsilon$ is arbitrary, this implies $\hat{\varphi}_{I,J(I)}$ converges to $\varphi^*$ almost surely as desired.

    Furthermore, suppose that $z^*$ is a unique maximizer of $\alpha(z)$ over $\mathbb{Z}$. Consider any neighborhood $\mathbb{N}$ of $z^*\in\mathbb{Z}$. Following from the continuity of $\alpha$ and the compactness of $\mathbb{Z}$, there exists $\epsilon>0$ such that 
    $$\alpha(z)>\alpha(z^*)+2\epsilon,$$
    for all $z\in\mathbb{Z}$ and $z\not\in\mathbb{N}$. This combines with \eqref{eqn:continuousalpha} implies 
    \begin{equation}
        \hat{\alpha}_{I,J(I)}(z)>\alpha(z^*)+\epsilon,
    \end{equation}
    for all $z\in\mathbb{Z}$ and $z\not\in\mathbb{N}$. Also, since we have $|\hat{\varphi}_{I,J(I)}-\varphi^*|<\epsilon$, this implies
    \begin{equation}
        \hat{\alpha}_{I,J(I)}(\hat{z}_{I,J(I)})=\hat{\varphi}_{I,J(I)}<\varphi^*+\epsilon=\alpha(z^*)+\epsilon.\label{eqn:solncon}
    \end{equation}
    Equation \eqref{eqn:solncon} implies $\hat{z}_{I,J(I)}\in\mathbb{N}$. Since $\mathbb{N}$ can be chosen arbitrarily, it implies $\hat{z}_{I,J(I)}\rightarrow z^*$ almost surely.

    Now, let us consider the case where $\alpha$ has multiple maximizers over $\mathbb{Z}$. Suppose for the sake of contradiction that $d(\hat{z}_{I,J(I)},\mathcal{Z}^*):=\inf_{z\in\mathcal{Z}^*}||\hat{z}_{I,J(I)}-z||\nrightarrow0$. Since $\mathbb{Z}$ is compact, there exists a sequence $\{\hat{z}_{I,J(I)}:I\geq 1\}$ such that $d(\hat{z}_{I,J(I)},\mathcal{Z}^*)\geq \epsilon$ for some $\epsilon>0$ and the sequence $\{\hat{z}_{I,J(I)}:I\geq 1\}$ converges to a point $z'\in \mathbb{Z}$, but not in $\mathcal{Z}^*$. By the continuity of $\alpha$, we have that $\alpha(\hat{z}_{I,J(I)})\rightarrow\alpha(z')<\varphi^*$ as $I\rightarrow\infty$. Moreover, $\hat{\varphi}_{I,J(I)}=\hat{\alpha}_{I,J(I)}(\hat{z}_{I,J(I)})$ and
    \begin{equation}
        \hat{\alpha}_{I,J(I)}(\hat{z}_{I,J(I)})-\alpha(z')=(\hat{\alpha}_{I,J(I)}(\hat{z}_{I,J(I)})-\alpha(\hat{z}_{I,J(I)}))+(\alpha(\hat{z}_{I,J(I)})-\alpha(z')).\label{eqn:part2thm1.1}
    \end{equation}
    The first term on the right-hand side of \eqref{eqn:part2thm1.1} goes to zero as $I\rightarrow\infty$ by Lemma \ref{lemma:unicon}, and the second term also converges to 0 by the continuity of $\alpha$. This implies that $\hat{\varphi}_{I,J(I)}\rightarrow\alpha(z')<\varphi^*$, which is a contradiction. Thus, we have $d(\hat{z}_{I,J(I)},\mathcal{Z}^*)\rightarrow0$ almost surely, as desired.
\end{proof}

\section{Additional Details on Acquisition Function Optimization}\label{appdx:opt}

We describe model configuration and hyperparameters used to optimize the acquisition functions used in our numerical experiments. We also present a comparison between the one-shot optimization and optimization-via-discretization approaches for optimizing the p-KGFN acquisition.

\subsection{Hyperparameters in Acquisition Function Optimization}

In our experiments, all methods utilize independent GPs with zero mean functions and the Mate\'rn 5/2 kernel \citepAP{genton2001classes}, with automatic relevance determination (ARD). 
The lengthscales of the GPs are assumed to have Gamma priors: for the Ackley, FreeSolv and Pharm problems, Gamma(3, 6); for the Manu-GP problem, Gamma(5, 2). The outputscale parameters are assumed to have Gamma(2, 0.15) priors in all problems.
The lengthscales and outputscales are then estimated via maximum a posteriori (MAP) estimation.

In p-KGFN, we estimate the posterior mean of a function network's final node value with $J= 64$ quasi-MC samples using Sobol sequences \citepAP{balandat2020botorch}. For EIFN, we follow the implementation in \citetAP{astudillo2021bayesian}, using $J=128$.

To compute MC estimates for the p-KGFN acquisition value, we use $I = 8$ fantasy models.  
As described in Section~\ref{subsec:discretization}, we optimize the p-KGFN acquisition function via discretization, replacing the domain of the optimization problem in line 6 of Algorithm~\ref{algo:KGFN} by a discrete set of candidate solutions $\mathbb{A}$. We include in $\mathbb{A}$ the current maximizer of the final node posterior mean $\bmx_n^*$, $N_T = 10$ points obtained from a Thompson sampling method described in the main text, $N_L = 10$ local points with $r=0.1$.

When optimizing EIFN, TSFN, KGFN with full evaluations or p-KGFN for a problem with $d$-dimensional function network input, we use $100d$ raw samples for identifying the starting points for the multi-start acquisition optimization and $10d$ starting points for the multi-start acquisition function optimization. We set the number of raw samples to 100 and the number of starting points to 20 when optimizing the standard EI and KG acquisition functions.

We refit the hyperparameters of the GP models in each iteration. For p-KGFN, this occurs after we obtain one additional observation of a specific function node; for other benchmarks, this occurs after we obtain observations of the entire function network for a design point.

All algorithms are implemented in the open source BoTorch package \citepAP{balandat2020botorch}. We extend the implementation outlined in \citetAP{astudillo2021bayesian} to enable partial evaluations for function networks. 

\subsection{Comparison Between One-Shot Optimization and Optimization-Via-Discretization}
\label{appdx:OSvsDiscrete}
\begin{figure}[h!]
    \centering
    \includegraphics[width=1.0\textwidth]{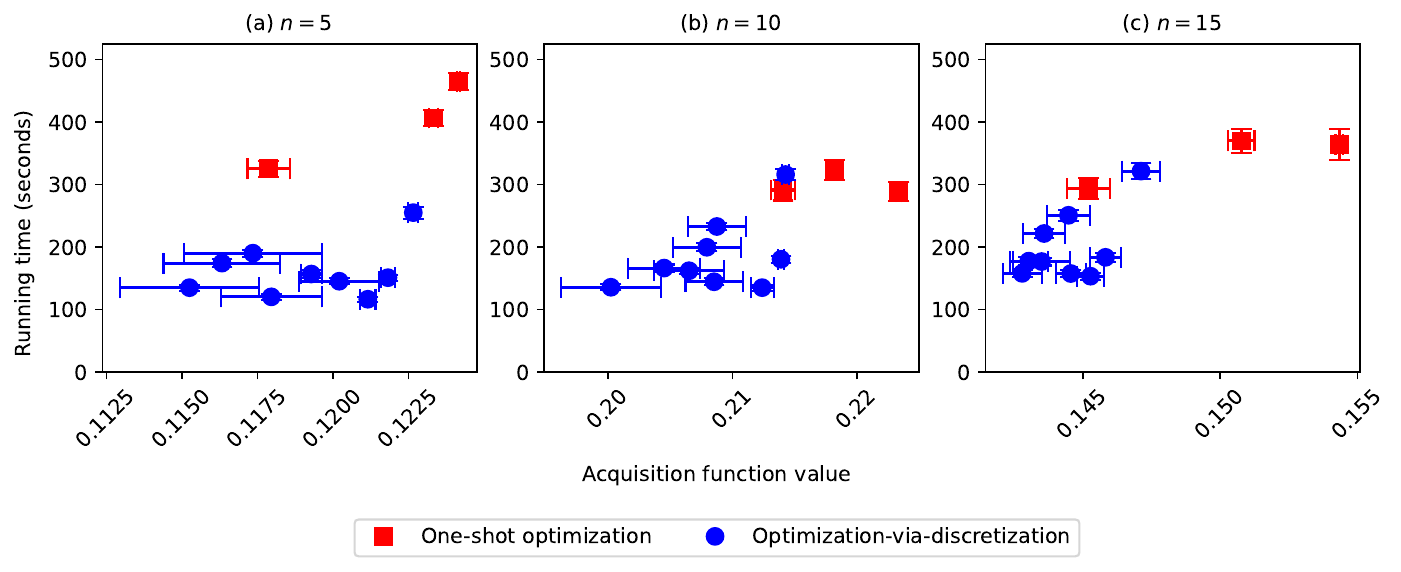}
    \caption{Comparison between one-shot optimization and optimization-via-discretization in terms of acquisition value attained and computation time, on the Ackley6D problem with different numbers of fully-evaluated points: (a) $n=5$, (b) $n=10$, and (c) $n=15$. Both the value averaging over 100 trials and the standard error are presented. }
    \label{fig:OsDisC}
\end{figure}
We perform a comparative analysis between two methods for optimizing KG-based acquisition functions discussed in Section~\ref{subsec:discretization}: ``One-shot'' optimization and optimization-via-discretization. 

The one-shot optimization method, proposed by \citetAP{balandat2020botorch}, has gained popularity in optimizing non-myopic acquisition functions in recent years \citepAP{jiang2020efficient, astudillo2021multi, daulton2023hvkg}. This method effectively addresses the nested optimization problem, a common challenge encountered when optimizing KG. However, it introduces its own challenges -- namely, by turning the nested optimization problem into a single high-dimensional (and typically more difficult) optimization problem, the one-shot optimization approach can be more likely to get stuck in local optima and require long computation time (due to the higher number of optimization variables, and the potential of the numerical optimization failing to converge).

Optimization-via-discretization, as used by \citetAP{frazier2009knowledge}, is a commonly used alternative technique for optimizing KG \citepAP{ungredda2022efficient, buckingham2023bayesian}. This method accelerates KG optimization by discretizing the domain of the inner optimization problem. Nevertheless, it has its own limitations, as the discretization will lead to less accurate solutions for the inner optimization problem, potentially resulting in sub-optimal outcomes, especially in higher dimensions (due to the coarser coverage of the space).

For p-KGFN, we follows the optimization-via-discretization approach. However, rather than selecting points randomly within the domain or arranging them in a grid, we intelligently select the points that form the discrete set for the inner optimization problem (see Section~\ref{subsec:discretization}).

To better understand the performance of one-shot optimization and optimization-via-discretization in the context of optimizing p-KGFN, we conduct a comparative analysis. Specifically, we examine both approaches under different configurations using the Ackley6D test problem. We consider problem instances where we have access 5, 10, and 15 full evaluations of the function network. These observations are held fixed and used to fit GP network models. We seek to optimize p-KGFN at the first function node. 

For both approaches, we explore three choices ($I=2, 4, 8$) for the number of fantasy models for estimating the p-KGFN acquisition values. In the optimization-via-discretization approach, we generate the discrete set by using Thompson sampling approach, i.e., sampling $N_T$ realizations of a function network based on the current posterior distributions and optimize the realizations to obtain their maximizers to include in the discrete set $\mathbb{A}$, and randomly choosing $N_L$ local points close to the current posterior mean maximizer $x_n^*$. The current posterior mean maximizer itself is also included in the set. We consider three discrete set sizes for the inner optimization problem: $|\mathbb{A}|=11$ ($N_T=N_L=5$), $|\mathbb{A}|=21$ ($N_T=N_L=10$), and $|\mathbb{A}|=41$ ($N_T=N_L=20$).
We measure performance in terms of both true acquisition function value of the selected candidates (approximated by the MC estimator described in Algorithm~\ref{algo:KGFN}, with $I=512$ and $J=128$) and computational time (measured in seconds), averaging over 100 trials.

In Figure~\ref{fig:OsDisC} we observe that although one-shot optimization demonstrates the best performance in terms of acquisition value attained, optimization-via-discretization can attain comparable acquisition value at significantly reduced run times. 
For the setup used in our experiments ($I=8$ and $|\mathbb{A}| = 21$), optimization-via-discretization achieves slightly lower acquisition value compared to the best-performing one-shot optimization method with $I=8$ (1.89\%, 4.25\%, and 5.51\% lower for the problem instances with 5, 10, and 15 fully evaluated points, respectively).
However, it significantly reduces the average run time compared to the best-performing one-shot optimization algorithm, by 67.5\%, 37.4\%, and 50.5\%.
The significant reduction in run time, along with the marginal loss in acquisition values, justifies the employment of the discretization approach in our experiments.
The full comparison analysis results are summarized in Table~\ref{tab:comparison}.

\begin{table}[]
\caption{Comparisons between one-shot optimization and optimization-via-discretization on the Ackley6D test problem, with 5, 10, and 15 design points fully evaluated across the network. Performances are reported in terms of acquisition function value obtained and running time. Both the value averaging over 100 trials and the standard error are reported. Numbers in boldface denote the highest acquisition function value and the shortest running time.}\label{tab:comparison}\vspace{0.1in}
\resizebox{\textwidth}{!}{%
\begin{tabular}{ccccc}
\multicolumn{5}{c}{(a) Number of fully evaluated points: 5}   \\ \toprule
\textbf{Method} & \textbf{Num of fantasy models $I$} & \textbf{Size of discrete set $|\mathbb{A}|$} & \textbf{Avg. acqf. val.} & \textbf{Avg. running time (seconds)} \\ \midrule
\multirow{3}{*}{One-shot} & 2 & NA & $0.11787 \pm 0.00070$ & $325 \pm 13$ \\
 & 4 & NA & $0.12332 \pm 0.00016$ & $406 \pm 13$ \\
 & 8 & NA & \textbf{$\mathbf{0.12416 \pm 0.00004}$} & $465 \pm 14$ \\ \midrule
\multirow{9}{*}{Discretization} & 2 & 11 & $0.11525 \pm 0.00231$ & $135 \pm 5\;$ \\
 & 2 & 21 & $0.11928 \pm 0.00035$ & $157 \pm 6\;$ \\
 & 2 & 41 & $0.1164 \pm 0.00194$ & $174 \pm 6\;$ \\
 & 4 & 11 & $0.11795 \pm 0.00166$ & $120 \pm 5\;$ \\
 & 4 & 21 & $0.12020 \pm 0.00133$ & $145 \pm 5\;$ \\
 & 4 & 41 & $0.11734 \pm 0.00226$ & $190 \pm 6$ \\
 & 8 & 11 & $0.12115 \pm 0.00026$ & $\mathbf{117 \pm 4}$ \\
 & 8 & 21 & $0.12181 \pm 0.00024$ & $151 \pm 5\;$ \\
 & 8 & 41 & $0.12265 \pm 0.00017$ & $255 \pm 10$ \\ \bottomrule
 &  &  &  & \\

\multicolumn{5}{c}{(b) Number of fully evaluated points: 10} \\ \toprule
\textbf{Method} & \textbf{Num of fantasy models $I$} & \textbf{Size of discrete set $|\mathbb{A}|$} & \textbf{Avg. acqf. val.} & \textbf{Avg. running time (seconds)} \\ \midrule
\multirow{3}{*}{One-shot} & 2 & NA & $0.2141 \pm 0.0010$ & $291 \pm 17$ \\
 & 4 & NA & $0.2182 \pm 0.0006$ & $324 \pm 16$ \\
 & 8 & NA & \textbf{$\mathbf{0.2234 \pm 0.0002}$} & $289 \pm 16$ \\ \midrule
\multirow{9}{*}{Discretization} & 2 & 11 & $0.2085 \pm 0.0023$ & $145 \pm 5\;$ \\
 & 2 & 21 & $0.2045 \pm 0.0029$ & $166 \pm 6\;$ \\
 & 2 & 41 & $0.2079 \pm 0.0027$ & $200 \pm 7\;$ \\
 & 4 & 11 & $0.2002 \pm 0.0040$ & $136 \pm 5\;$ \\
 & 4 & 21 & $0.2065 \pm 0.0028$ & $162 \pm 5\;$ \\
 & 4 & 41 & $0.2087 \pm 0.0023$ & $233 \pm 6$ \\
 & 8 & 11 & $0.2124 \pm 0.0009$ & $\mathbf{135 \pm 5}$ \\
 & 8 & 21 & $0.2139 \pm 0.0003$ & $181 \pm 5\;$ \\
 & 8 & 41 & $0.2143 \pm 0.0003$ & $316 \pm 9$ \\ \bottomrule
 &  &  &  & \\ 

\multicolumn{5}{c}{(c) Number of fully evaluated points: 15} \\ \toprule
\textbf{Method} & \textbf{Num of fantasy models $I$} & \textbf{Size of discrete set $|\mathbb{A}|$} & \textbf{Avg. acqf. val.} & \textbf{Avg. running time (seconds)} \\ \midrule
\multirow{3}{*}{One-shot} & 2 & NA & $0.1452 \pm 0.0008$ & $294 \pm 17$ \\
 & 4 & NA & $0.1508 \pm 0.0005$ & $370 \pm 19$ \\
 & 8 & NA & \textbf{$\mathbf{0.1543 \pm 0.0001}$} & $364 \pm 24$ \\ \midrule
\multirow{9}{*}{Discretization} & 2 & 11 & $0.1428 \pm 0.0007$ & $158 \pm 6\;$ \\
 & 2 & 21 & $0.1430 \pm 0.0006$ & $177 \pm 6\;$ \\
 & 2 & 41 & $0.1436 \pm 0.0008$ & $222 \pm 8\;$ \\
 & 4 & 11 & $0.1446 \pm 0.0005$ & $157 \pm 6\;$ \\
 & 4 & 21 & $0.1435 \pm 0.0010$ & $177 \pm 6\;$ \\
 & 4 & 41 & $0.1445 \pm 0.0008$ & $250 \pm 9$ \\
 & 8 & 11 & $0.1453 \pm 0.0005$ & $\mathbf{153 \pm 6}$ \\
 & 8 & 21 & $0.1458 \pm 0.0006$ & $184 \pm 7\;$ \\
 & 8 & 41 & $0.1471 \pm 0.0007$ & $321 \pm 13$ \\ \bottomrule
\end{tabular}%
}
\end{table}

\section{Additional Details on Numerical Experiments}\label{appdx:experiments}

\subsection{Ackley6D (Ackley)}
We design the Ackley synthetic test problem as a two-stage function network (Figure~\ref{fig:eggandfree}). The first function node is the 6-dimensional negated Ackley function \citepAP{ackley2012connectionist}: 

\begin{equation*}
    f_1(x)=20\exp\left(-0.2\sqrt{\frac{1}{6}\sum_{i=1}^6x_i^2}\right)+\exp\left(\frac{1}{6}\sum_{i=1}^6\cos{(2\pi x_i)}\right)-20-\exp(1),
\end{equation*}
where $x_i\in[-2,2]$ for $i=1,\hdots,6$. The second function node, which takes as an input the output of the first function node, is defined as follows
\begin{equation*}
    f_2(y)=-y\sin{\left(\frac{5y}{6\pi}\right)}.
\end{equation*}

\subsection{Manufacturing Network (Manu-GP)}
Motivated by real-world manufacturing applications where several intermediate parts are produced and then assembled to create a final product, we formulate this second test problem involving two processes, each of which takes a two-dimensional input (Figure~\ref{fig:manu}). The first process consists of two sequential sub-processes, denoted as $f_1$ and $f_2$. The first sub-process takes $x_1,x_2$ as inputs and returns $y_1$ which is taken by the second sub-process to produce an output $y_2$. The second process, $f_3$, takes $x_3$ and $x_4$ as input and produces an output $y_3$. The outputs $y_2$ and $y_3$ are then combined in the final process $f_4$ to produce a final output $y_4$, which we aim to maximize. This experiment is designed to mimic a manufacturing scenario. Here, we set each input $x_i\in[-1,1]$, for $i=1,\ldots,4$. 

For each function node, we draw a sample path from a GP prior with the Mat\'ern 5/2 kernel \citepAP{genton2001classes}. Notably, we choose different lengthscales in the kernels for difference function nodes: 0.631 for $f_1$, 1 for $f_2$, 1 for $f_3$, and 3 for $f_4$. The outputscale parameter in the kernel is set to 0.631 for all functions but $f_4$, which has outputscale equal to 10. 
We show the drawn sample paths for each of the function nodes in Figure~\ref{fig:mugpfunc}.

\begin{figure}
    \centering
    \includegraphics[width=0.9\textwidth]{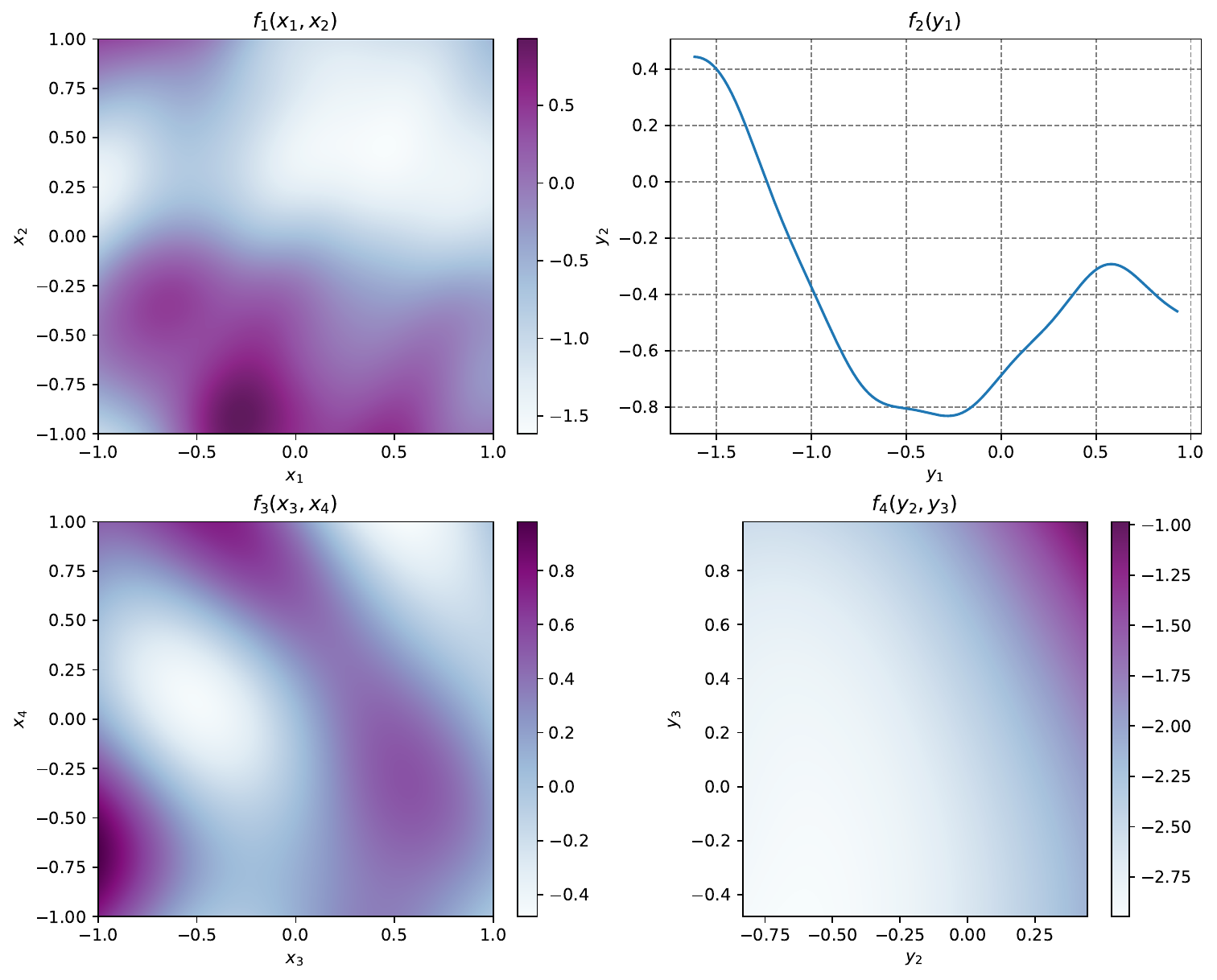}
    \caption{The sample paths drawn from GP priors for the manufacturing problem.}
    \label{fig:mugpfunc}
\end{figure}
\subsection{Molecular Design (FreeSolv)}
\label{appx:freesolv}
We use the FreeSolv dataset from \citetAP{mobley2014freesolv}, which comprises both calculated and experimental free energies (in kcal/mol) for 642 small molecules. 
Since lower free energy is preferred, we negate free energy (both calculated and experimental), setting our objective to maximizing the negative experimental free energy. 

To represent each small molecule in a continuous space, we utilize a variational autoencoder trained on the Zinc dataset as studied in \citetAP{gomez2018automatic}, resulting in a 196-dimensional representation in the unit cube, i.e., $x\in [0,1]^{196}$. We reduce the dimentionality of the representation to three through the standard principal component analysis (PCA) technique.

We then utilize the entire dataset to train two GP models, and use the posterior mean of the trained model as the underlying functions in the function network (Figure~\ref{fig:eggandfree}): 

\begin{enumerate}
    \item $f_1$ takes a three-dimensional representation as input and predicts the negative calculated free energy $y_1$;
    \item $f_2$ takes the negative calculated free energy as input and predicts the negative experimental free energy (Figure~\ref{fig:freesolv2}).
\end{enumerate}

\begin{figure}[h]
    \centering
    \includegraphics[width=0.5\textwidth]{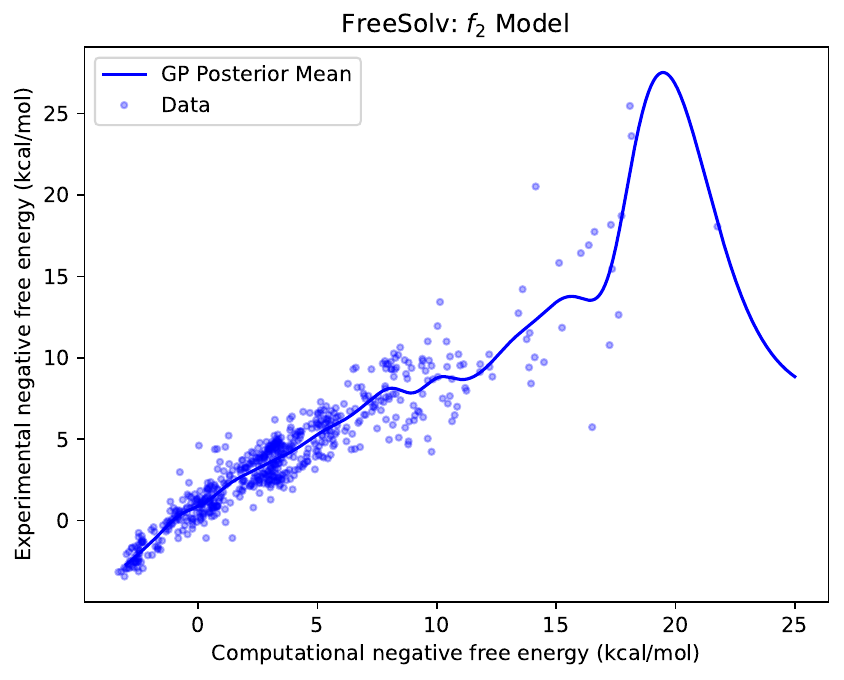}
    \caption{The posterior mean function of the GP fitted between calculated and experimental free energies.}
    \label{fig:freesolv2}
\end{figure}

\subsection{Pharmaceutical Product Development (Pharm)}
In this test problem, we tackle the challenge of manufacturing orally disintegration tablets (ODTs) that meet pharmaceutical standards, focusing on two crucial properties: disintegration time ($f_1$) and tensile strength ($f_2$). To model these properties, we adopt the neural network (NN) models proposed in \citetAP{sano2020application}. 

Specifically, the two properties are defined as functions of four parameters describing the production process, namely, $\beta$ form D-mannitol ratio in the total D-mannitol $(x_1)$, L-HPC ratio in a formulation $(x_2)$, granulation fluid level $(x_3)$ and compression force $(x_4)$. Each parameter $x_i$ is constrained to the range $[-1,1]$. The fitted NN models are shown as follows:
\begin{align*}
    f_1(x) &= -3.95 +9.20 \times (1+\exp(-(0.32+5.06x_1-4.07x_2-0.36 x_3-0.34\times x_4))^{-1} \\
    &\qquad+ 9.88 \times(1+\exp(-(-4.83+7.43x_1+3.46x_2+9.19x_3+16.58x_4)))^{-1}\\
    &\qquad+10.84\times(1+\exp(-7.90-7.91x_1-4.48x_2-4.08x_3-8.28x_4))^{-1}\\
    &\qquad+15.18\times(1+\exp(-(9.41-7.99x_1+0.65x_2+3.14x_3+0.31x_4)))^{-1}.
\end{align*}
and
\begin{align*}
f_2(x) &= 1.07 + 0.62 \times (1+\exp(-(3.05+0.03x_1 - 0.16x_2 + 4.03x_3-0.54 x_4)))^{-1}\\
&\qquad+0.65\times(1+\exp(-(1.78+0.60x_1 -3.19x_2 +0.10x_3+0.54x_4)))^{-1}\\
&\qquad-0.72\times(1+\exp(-(0.01+2.04x_1 -3.73x_2 +0.10x_3 -1.05x_4)))^{-1}\\
&\qquad-0.45\times(1+\exp(-(1.82+4.78x_1 +0.48x_2-4.68x_3 -1.65x_4)))^{-1}\\
&\qquad-0.32\times(1+\exp(-(2.69+5.99x_1 +3.87x_2 +3.10x_3-2.17x_4)))^{-1},
\end{align*}

To measure the quality of a tablet, the same study introduced a score function $f_3$ (treated as a known function in our experiment), which combines the two properties $f_1$ and $f_2$: 
\begin{equation*}
    f_3 = \frac{(60-f_1)}{60}\times \frac{f_2}{1.5}, 
\end{equation*}
where the first term aims to ensure that the disintegration time is not too long (less than 60 seconds), and the second term aims to ensure that the tensile strength is large enough for production and distribution.

\subsection{Additional Experiment Without Upstream Evaluation Condition: Ackley6D+Matyas2D (AckMat)}
\label{app:addexp}
We consider the setting in which node evaluations do not require previously evaluated inputs from upstream nodes. Instead, we assume each node can be evaluated at any point in the set of possible outputs of the upstream nodes. We design this problem as a 7-dimensional cascade network where the first node is the 6-dimensional Ackley function \citepAP{ackley2012connectionist}:
\begin{equation*}
    f_1(x)=-20\exp\left(-0.2\sqrt{\frac{1}{6}\sum_{i=1}^6x_i^2}\right)-\exp\left(\frac{1}{6}\sum_{i=1}^6\cos{(2\pi x_i)}\right)+20+\exp(1),
\end{equation*}
where $x_i\in[-2,2]$ for $i=1,\hdots,6$. The second function node, which takes as input the output $y$ of the first node and one additional input $x_7$, is the negated Matyas function \citeAP{jamil2013literature}:
\begin{equation*}
    f_2(y,x_7)=-0.26(y^2+x_7^2)+0.48yx_7.
\end{equation*}
We set the range $x_7\in[-10,10]$ and we assume that the range of the output from the first node is known, i.e. $y\in[0,20]$.
The evaluation costs for this experiment are set to be $c_1 = 1$ and $c_2 = 49$ and we restrict to the same BO budget equal to 700. The results in Figure~\ref{fig:ackmatres} show that p-KGFN is also effective in this setting. Interestingly, we see that p-KGFN makes progress more slowly in the beginning, but then quickly overtakes and substantially outperform all baselines. This reflects the fact that the algorithm initially allocates most of its budget to learning about the behavior of the first node that is cheap to evaluate, and then with that knowledge moves to effectively optimize the second.
\begin{figure}[h]
    \centering
    \includegraphics[width=0.6\textwidth]{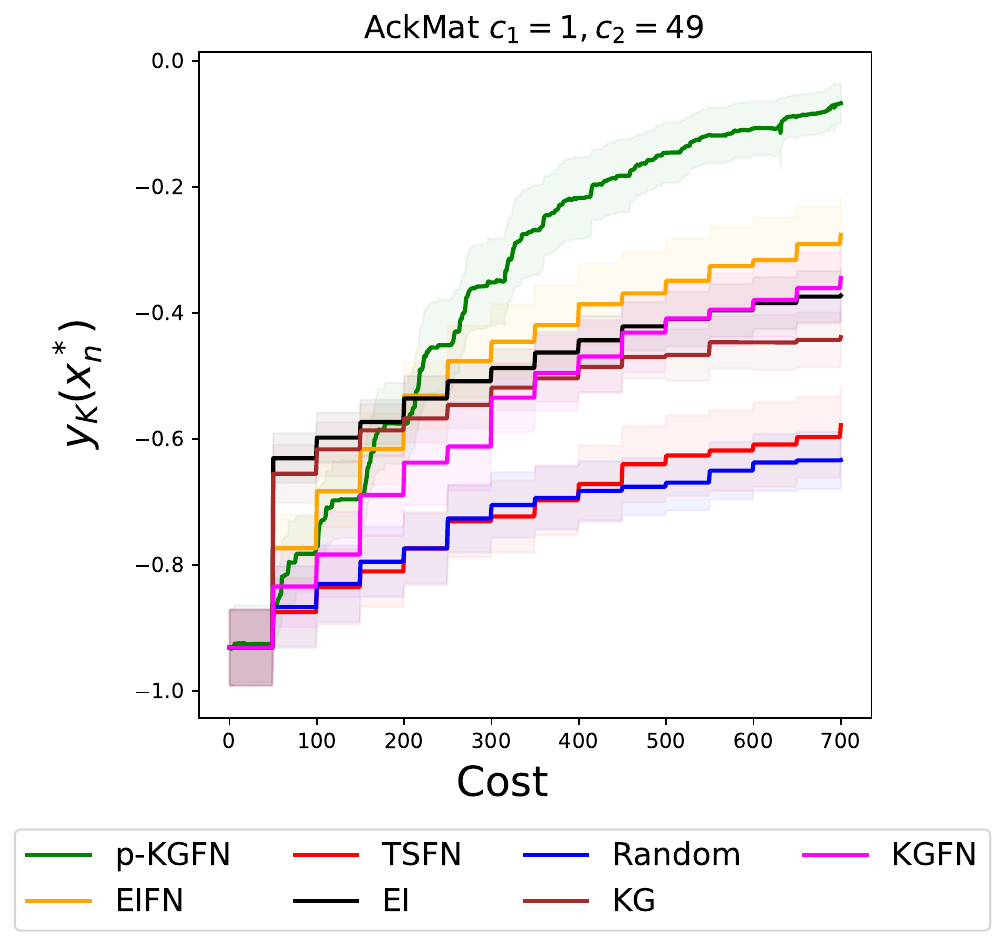}
    \caption{Optimization performance on AckMat problem without upstream evaluation condition comparing between our proposed p-KGFN and benchmarks including EIFN, KGFN, TSFN, EI, KG and Random.}
    \label{fig:ackmatres}
\end{figure}

\subsection{Additional Experiment with Noisy Observations} \label{appdx:addtl_exp_noisy}
In this section, we consider the FreeSolv problem presented in Section~\ref{sec:experiments} and Appendix~\ref{appx:freesolv}. We conduct additional experiments that add normally distributed noise to a node’s output before it is passed to subsequent nodes. 
We assume that the noise at each function node follows the standard normal distribution $\mathcal{N}(0,1)$. 

We use the noisy observations to update the GP describing each node. This entails standard equations for Gaussian process regression with noisy observations \citepAP{williams2006gaussian}. 

We consider our default setting, i.e. costs $c_1=1$ and $c_2=49$ with a total BO budget equal to 700. 
Figure~\ref{fig:resultnoise} illustrates the performance comparison between p-KGFN and benchmark algorithms on this variant of the test problem. The results demonstrate that p-KGFN still outperforms all benchmark algorithms, indicating its robustness to observation noise.

\begin{figure}
    \centering
    \includegraphics[width=0.4\textwidth]{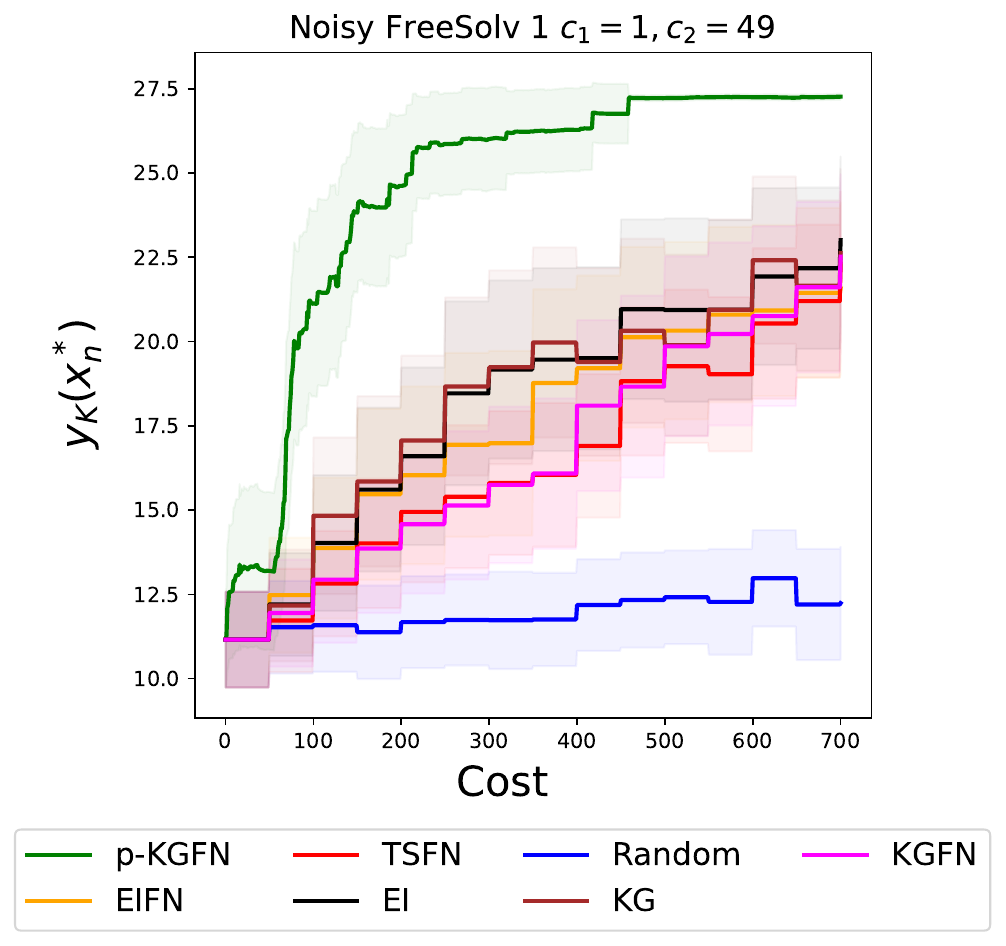}
    \caption{Optimization performance comparing between our proposed p-KGFN and benchmarks including EIFN, KGFN, TSFN, EI, KG, and Random on the FreeSolv problem with noisy observations, averaging over 30 trials.}
    \label{fig:resultnoise}
\end{figure}

\subsection{Additional Experiments where Downstream Nodes are More Difficult to Optimize}\label{appdx:addtl_exp_hard_downstream}
We conduct additional experiments where upstream nodes are more difficult to optimize than downstream nodes. We consider two problem setups:
\begin{itemize}
    \item (GPs-1): A sequential network with two nodes, shown in Figure~\ref{fig:eggandfree}. The first function node takes in a 1-D input, $x\in[-1,1]$. Both function nodes are drawn from GP priors. The lengthscales of the GP priors for the first and second nodes are set to 0.5 and 0.25, respectively. This ensures that the second node is more difficult to optimize compared to the first node. As with the test problems in our main paper, we set the cost of evaluating the second node to be substantially higher than that of evaluating the first node, i.e., $c_1 = 1$ and $c_2 =49$. We set the total BO budget at 700.
    \item (GPs-2): A function network with four function nodes (Figure~\ref{fig:GPs2}). The first three function nodes, $f_1$, $f_2$ and $f_3$, respectively take input $x_1$, $x_2, x_3\in[-1,1]$. The final function node, $f_4$, takes the outputs of $f_1$, $f_2$, and $f_3$ as its inputs. All functions are drawn from GP priors with a common lengthscale. We set the evaluation costs to be $c_1=c_2=c_3=1$ and $c_4=47$ and a total BO budget of 700.
\end{itemize}
We used the same settings for other parameters, such as the number of initial observations, as in the main experiments.
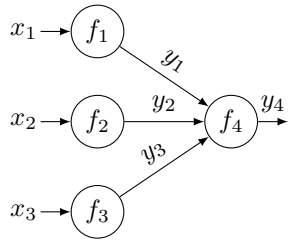
\begin{figure}[]
\centering
\begin{tikzpicture}[
init/.style={
  draw,
  circle,
  inner sep=0.7pt,
  minimum size=0.7cm
},
init2/.style={
  circle,
  inner sep=0.2pt,
  minimum size=0.2cm
},
]
\begin{scope}[start chain=1,node distance=4mm]
\node[on chain=1, init2]
(x1) {$\bmx_1$};
\node[on chain=1, init] 
  (f1) {$f_1$};
\end{scope}
\begin{scope}[start chain=2,node distance=4mm]
 \node[on chain=2, init2] at (0,-12mm)
(x2) {$\bmx_2$};
\node[on chain=2,init] 
 (f2) {$f_2$};

\node[on chain=2,init] at (2,-12mm)
 (f4) {$f_4$};
 \node[on chain=2,init2] (f5) 
  {};
 \end{scope}
\begin{scope}[start chain=3,node distance=4mm]
  \node[on chain=3, init2] at (0,-24mm)
(x3) {$\bmx_3$};
\node[on chain=3,init]
 (f3) {$f_3$};
  \end{scope}
\draw[-latex] (f1) -- (f4)node[pos=0.5,sloped,above] {$y_1$};
\draw[-latex] (f2) -- (f4)node[pos=0.5,sloped,above] {$y_2$};
\draw[-latex] (f3) -- (f4)node[pos=0.5,sloped,above] {$y_3$};
\draw[-latex] (f4) -- (f5)node[pos=0.5,sloped,above] {$y_4$};
\draw[-latex] (x1) -- (f1);
\draw[-latex] (x2) -- (f2);
\draw[-latex] (x3) -- (f3);
\end{tikzpicture}
\vspace{0.1in}
\caption{A function network structure for an additional experiment where the first layer node is harder-to-learn than the second layer (GPs-2).}
\label{fig:GPs2}
\end{figure}

As presented in Figure~\ref{fig:addexper}, our algorithm, p-KGFN, performs comparably to the other benchmarks in these additional problems.
\begin{figure}
    \centering   \includegraphics[width=0.7\linewidth]{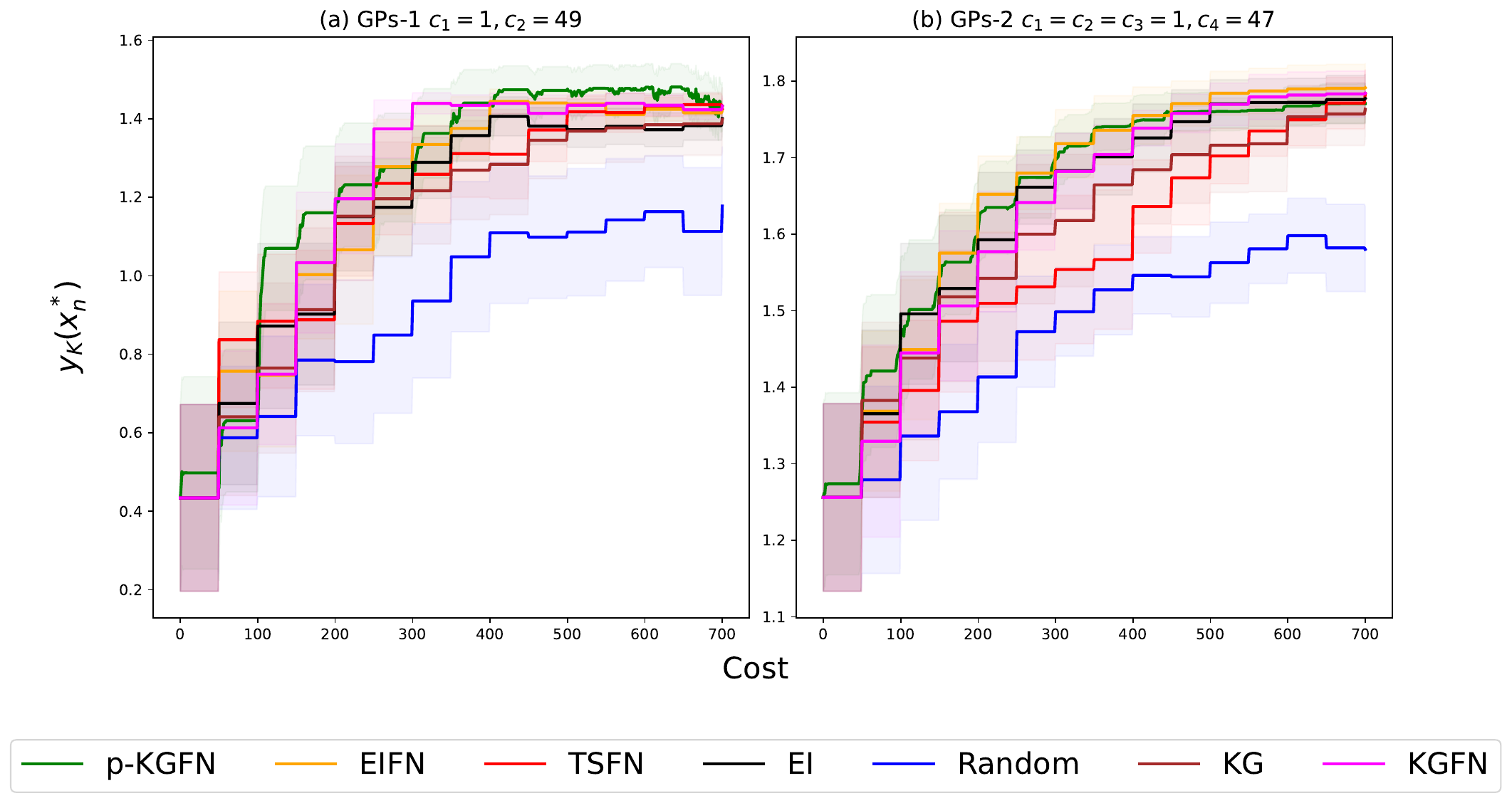}
    \caption{Optimization performance comparison between p-KGFN and benchmark algorithms on additional experiments where the first layer node is harder-to-learn than the second layer node. (a) GPs-1 with evaluation costs $c_1=1$, $c_2=49$ and (b) GPs-2 with evaluation costs $c_1=c_2=c_3=1$ and $c_4=47$. Both problems have BO budget equal to 700. Performance curves for p-KGFN and benchmarks, averaging over 30 replications. The mean and $\pm 2$ standard errors of the mean over the evaluation budget used are reported.}
    \label{fig:addexper}
\end{figure}

\section{Sensitivity Analysis for Evaluation Costs}\label{appdx:sensitivity}
We conduct a sensitivity analysis to examine the impact of cost functions on the optimization performance across three experiments with two nodes presented in the main text: Ackley (result is presented in the main text), FreeSolv and Pharm. In this section, we again consider scenarios where evaluating a downstream node requires previously obtained outputs from its parent nodes. This implies that the first node must be evaluated regardless of its cost. Our focus is thereby directed towards assessing the consequences of varying the cost associated with the second node. We investigate three cost function scenarios: (a) $c_1=1, c_2=1$; (b) $c_1=1, c_2=10$; and (c) $c_1=1, c_2=49$, which correspond to the situations where both nodes have similar evaluation costs, where one node has a higher evaluation cost than the other, and where one node has an exceptionally high evaluation cost, respectively. The evaluation budgets for each problem are set to 50, 150, and 700, respectively, in the three scenarios.

We also conduct the sensitivity analysis study for an additional experiment: AckMat presented in Appendix \ref{app:addexp} for which we do not impose the upstream evaluation restriction.

Figure~\ref{fig:cost} reveals that performing partial evaluations notably improves optimization performance, 
especially when the costs of the two nodes are dramatically different. On the other hand, in the equal-cost scenario, p-KGFN takes less advantage of partial evaluating, tending to complete full evaluations in sequential networks (Ackley and FreeSolv), and chooses to evaluate the two properties in Pharm problem an equal number of times. Results for AckMat problem are presented in the last row of Figure~\ref{fig:cost} and are consistent with the previous three problems with upstream evaluation restriction. Table~\ref{tab:nodecount} reports the average number of times p-KGFN selected to evaluate each node in each problem and cost scenario.

\begin{figure}
    \centering
    \includegraphics[width=0.9\textwidth]{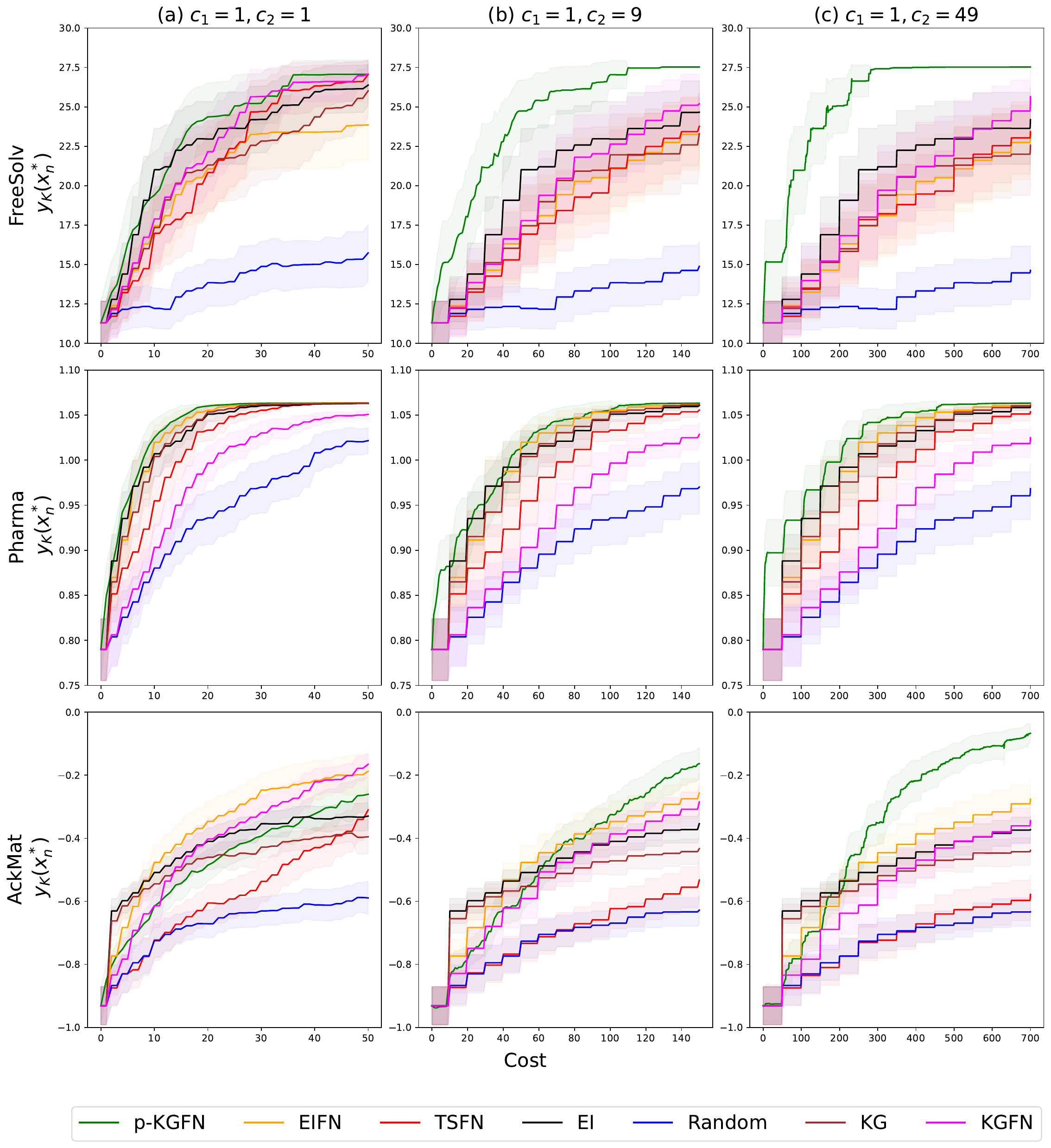}
    \caption{Cost sensitivity analysis for (from top to bottom) FreeSolv, Pharma and AckMat problems with different costs: (a) $c_1=1, c_2=1$; (b) $c_1=1, c_2=10$; and (c) $c_1=1, c_2=49$. The performance metric is the true objective value at the maximizer of final function node's posterior mean versus the budget spent.}
    \label{fig:cost}
\end{figure}

\begin{table}[h!]
\caption{Number of times each node was evaluated by p-KGFN and benchmark algorithms (averaging over 30 replications) for different problems and costs of evaluation.}
\label{tab:nodecount}
\resizebox{\textwidth}{!}{%
\begin{tabular}{lccc}
\toprule
\textbf{Problem} & \textbf{Costs of evaluation} & \textbf{\begin{tabular}[c]{@{}c@{}}Average number of times each \\ node was evaluated by p-KGFN\end{tabular}} & \textbf{\begin{tabular}[c]{@{}c@{}}Number of full function network \\ evaluations by benchmark algorithms\end{tabular}} \\ \midrule
Ackley & $c_1 = 1, c_2=1$ & $[36.1,13.9]$ & 25 \\
Ackley & $c_1 = 1, c_2=9$ & $[48.9,11.2]$ & 15 \\
Ackley & $c_1 = 1, c_2=49$ & $[64.6, 13.0]$ & 14 \\
Manufacturing & $c_1=5, c_2=10, c_3=10, c_4=45$ & $[30.8, 13.9, 18.0, 5.0]$ & 10 \\
FreeSolv & $c_1 = 1, c_2=1$ & $[31.0, 19.0]$ & 25 \\
FreeSolv & $c_1 = 1, c_2=9$ & $[39.0, 12.3]$ & 15 \\
FreeSolv & $c_1 = 1, c_2=49$ & $[62.3, 12.9]$ & 14 \\
Pharm & $c_1 = 1, c_2=1$ & $[23.8, 26.0]$ & 25 \\
Pharm & $c_1 = 1, c_2=9$ & $[27.6, 13.6]$ & 15 \\
Pharm & $c_1 = 1, c_2=49$ & $[63.0,13.0]$ & 14 \\
AckMat & $c_1 = 1, c_2=1$ & $[21.8, 28.2]$ & 25 \\
AckMat & $c_1 = 1, c_2=9$ & $[42.0, 12.0]$ & 15 \\
AckMat & $c_1 = 1, c_2=49$ & $[74.4, 12.8]$ & 14 \\ \bottomrule
\end{tabular}
}
\end{table}
\section{Wall Clock Times}
\label{supp:wallclock}
In this section, we report wall clock time on 8-core CPUs used to optimize each acquisition function on Ackley experiment.
\begin{table}[h!]
\caption{Acquisition optimization wall clock time in seconds on 8-core CPUs. Mean values and ± 2 standard errors are reported. KGFN takes significantly longer to optimize than p-KGFN because we use a larger number of samples when approximating its acquisition value.}\label{tab:wallclock}
\resizebox{\textwidth}{!}{%
\begin{tabular}{lccccccc}
\toprule
\textbf{Problem} & \textbf{EI} & \textbf{KG} & \textbf{Random} & \textbf{EIFN} & \textbf{KGFN} & \textbf{TSFN} & \textbf{p-KGFN} \\ \midrule
Ackley & 6.7 $\pm$ 0.5 & 76.9 $\pm$ 4.5 & 0.00033 $\pm$ 0.00001 & 51.9 $\pm$ 4.8 & 1362.6 $\pm$ 50.1 & 7.7 $\pm$ 0.2 & 246.6 $\pm$ 5.1 \\ 
Manufacturing & 4.5 $\pm$ 1.1 & 54.4 $\pm$ 7.8 & 0.00025 $\pm$ 0.00001 & 29.5 $\pm$ 3.6 & 2047.8 $\pm$ 83.1 & 4.0 $\pm$ 0.1 & 302.6 $\pm$ 15.0 \\ 
FreeSolv & 3.4 $\pm$ 0.3 & 111.7 $\pm$ 10.8 & 0.00036 $\pm$ 0.00001 & 57.9 $\pm$ 5.3 & 1050.9 $\pm$ 98.3 & 1.8 $\pm$ 0.1 & 158.7 $\pm$ 6.0 \\ 
Pharma & 3.9 $\pm$ 0.3 & 27.6 $\pm$ 1.4 & 0.00033 $\pm$ 0.00001 & 14.6 $\pm$ 1.4 & 222.4 $\pm$ 13.7 & 5.9 $\pm$ 0.7 & 101.4 $\pm$ 3.2 \\ 
AckMat & 1.4 $\pm$ 0.1 & 306.7 $\pm$ 27.0 & 0.00029 $\pm$ 0.00001 & 40.4 $\pm$ 2.4 & 1634.5 $\pm$ 86.2 & 9.3 $\pm$ 0.3 & 508.0 $\pm$ 16.7 \\ \bottomrule
\end{tabular}

}
\end{table}

\section{Additional Illustration of the Benefits of Partial Evaluations}
\label{app:pKGFNVsKGFN}
In this section, we add the performance of KGFN with full evaluations to our 1-dimensional illustration example previously presented in Section~\ref{subsec:advantage_partial} in order to highlight the substantial incremental benefits of performing partial evaluation. KGFN with full evaluations exhibits a similar behaviour to EIFN as depicted in the third row of Figure~\ref{fig:toyproblemwithFKGFN}. Specifically, KGFN makes decisions towards its goal of identifying a point with the best solution quality. It first decides to evaluate around the initial best inferred solution and then spends two full evaluations exploring the boundaries of the domain where uncertainty is high. Focusing only on the final goal without taking evaluation costs into account makes KGFN fail to obtain an accurate final composite function model and a good best inferred solution (pink square).
\begin{figure*}[h!]
    \centering
    \includegraphics[width=0.95\textwidth]{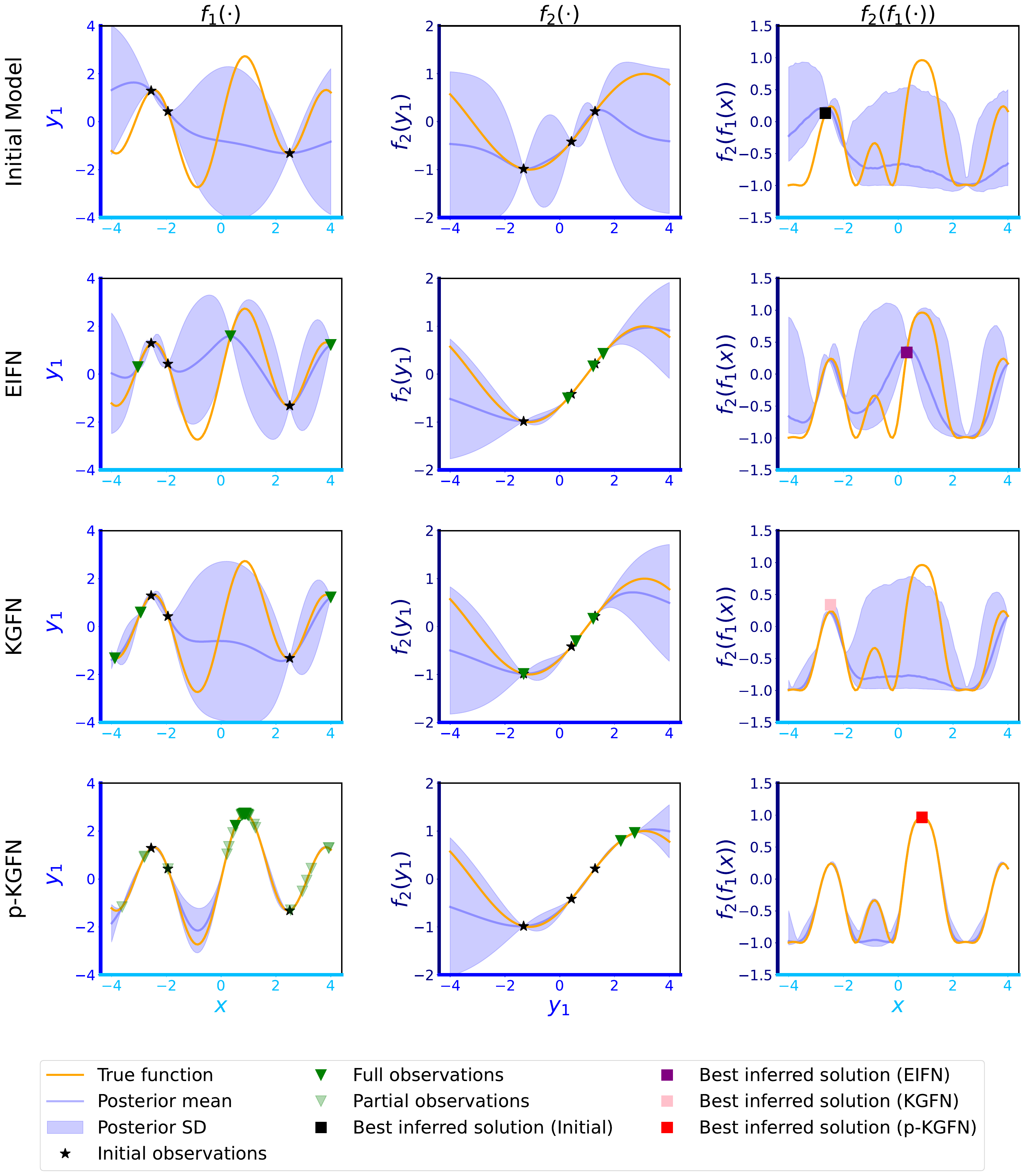}
    \caption{Comparison of EIFN, KGFN and p-KGFN on a 1-D synthetic two-stage function network $f_2(f_1(\cdot))$. The top row, from left to right, shows the initial models for $f_1(\cdot)$, $f_2(\cdot)$ and $f_2(f_1(\cdot))$. Similarly, the second, third and fourth rows show the resulting models upon budget depletion by EIFN, KGFN, and p-KGFN. Each true function is represented by an orange curve, while  blue curves and shaded blue areas denote posterior mean functions and posterior uncertainty, respectively. Black stars indicate the initial three points fully evaluated across the network for both algorithms. Dark green triangles represent the locations of full network evaluations. Light green triangles represent partial observations where only the first node was evaluated by p-KGFN. Black, purple, pink and red squares correspond to the initial and three final best inferred solutions identified by EIFN KGFN, and p-KGFN, respectively.}
    \label{fig:toyproblemwithFKGFN}
\end{figure*}

\section{Alternative Approach to Computing the  Comparison Metric}
\label{app:alter}
As outlined in the main text, we employed a posterior distribution of the final node value $y_K$ obtained from a statistical model that utilizes a network structure to compute our optimization comparison metric $y_K(x^*_n)$ across all algorithms including EI, KG and Random. The purpose is to underscore benefits of partial evaluations, but it unnecessarily favors these three algorithms as they do not actually consider a network structure in decision-making. In order to provide a more equitable comparison, we include the progress curves of the metric computed using a posterior distribution obtained from a standard Gaussian process model for these three algorithms. The results presented in Figure~\ref{fig:newmetric} illustrate, as expected, a degradation in their performance due to this modification. Notably, the Random baseline exhibits a declining trend in the AckMat problem when the network structure is not utilized. This is explained by the fact that the problem has a relatively small region of favorable outcomes.
\begin{figure}[h!]
    \centering
\includegraphics[width=\textwidth]{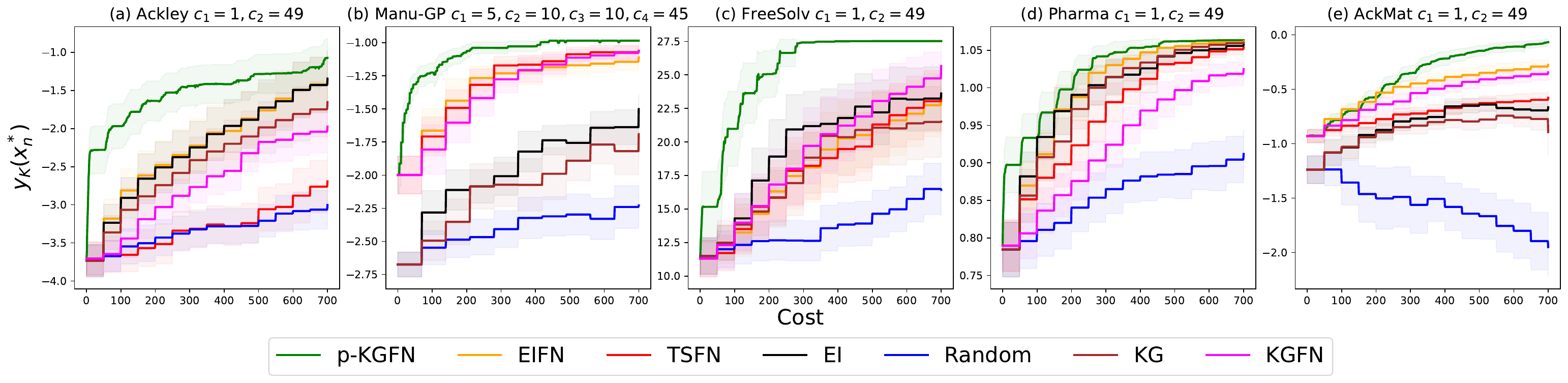}
    \caption{Optimization performance comparing between our proposed p-KGFN and benchmarks including EIFN, KGFN, TSFN, EI, KG and Random on four experiments: (a) Ackley, (b) Manu-GP, (c) FreeSolv, (d) Pharm and (e) AckMat. Every algorithm utilizes a statistical model in its decision-making process to calculate the comparison metric.}
    \label{fig:newmetric}
\end{figure}

\clearpage

\bibliographyAP{ref}
\bibliographystyleAP{icml2024}

\end{document}